\documentclass[letterpaper]{article}
\usepackage[preprint]{aaai2027}
\usepackage[hyphens]{url}
\usepackage{graphicx}
\usepackage{natbib}
\usepackage{caption}
\usepackage{amsmath}
\usepackage{amssymb}
\usepackage{amsthm}
\usepackage{booktabs}
\newtheorem{proposition}{Proposition}
\newtheorem{theorem}{Theorem}

\title{R$^2$A: Learning Persona Policies Through Persona Representation Learning and Runtime Alignment}
\author{
Mohan Zhang\textsuperscript{\rm 1},
Chengsong You\textsuperscript{\rm 1},
Xiaoyu Cao\textsuperscript{\rm 1},
Zhen Sun\textsuperscript{\rm 1},\\
Xiaohan Jia\textsuperscript{\rm 1},
Junwei Zhou\textsuperscript{\rm 2},
Yongchao Chen\textsuperscript{\rm 1}\corresponding
}
\affiliations{
\textsuperscript{\rm 1}College of AI, Tsinghua University\\
\textsuperscript{\rm 2}Independent Researcher\\
zhangmohan70@gmail.com, 51275901122@stu.ecnu.edu.cn, xiaoyucorynne@gmail.com,\\
sunzhen@apexin.ai, jiaxh23@mails.tsinghua.edu.cn, zjw330501@gmail.com,\\
yongchaochen12@gmail.com
}

\begin{document}
\maketitle

\begin{abstract}
The same Persona behavior can be beneficial in one context but harmful in another, causing static Persona elicitation to perform inconsistently across tasks. We introduce the \textbf{Persona Selection--Realization Framework}, which models behavior generation through a latent Persona state and decomposes it into Persona Selection and Persona Realization. The discrepancies between static Persona elicitation and an ideal Persona policy in these two components define the Selection Gap and Realization Gap, respectively. Building on this framework, we propose \textbf{R$^2$A}, a two-stage approach for learning Persona policies. Persona Representation Learning uses structured Who--How--What presentations to encode the target Persona's objective, conditional behavioral principles, and trajectory-level manifestations. Persona Runtime Alignment then removes the explicit Persona specification and jointly calibrates behavior selection and trajectory realization using task feedback. Across 12 evaluation settings covering the four principles of the Accountable-Professional Persona studied in this work, R$^2$A overall outperforms both the base model and static Persona elicitation. Ablation results further show that Persona Representation Learning is critical for preventing Runtime Alignment from producing behaviorally imbalanced policies and for achieving more stable Persona policy learning.
\end{abstract}

\section{Introduction}

Persona specifications provide a lightweight natural-language interface for steering the behavior of large language models (LLMs): by describing an identity, objectives, and behavioral principles, users can alter how a model responds and interacts. A simple approach is to reuse the same Persona specification across different tasks, which we refer to as \textbf{static Persona elicitation}. Prior work reports mixed effects: role-play prompts can improve zero-shot reasoning \citep{kong2024roleplay}, system-prompt Personas often yield no consistent benefit \citep{zheng2024helpful}, and coarsely matched Personas can degrade performance \citep{kim2025persona}.

To study this phenomenon under controlled conditions, we introduce the \textbf{Accountable-Professional Persona}, which describes a professional agent that takes responsibility for both the problem-solving process and its eventual outcome. Its four behavioral principles and their complementary behaviors are shown in Figure~\ref{fig:persona}. We compare the base model with the same model under static Accountable-Professional elicitation on benchmarks aligned with these principles. As shown in Figure~\ref{fig:static-results}, the same Persona prompt produces improvements, limited changes, or degradations across different tasks. In other words, static elicitation can alter the model's behavioral tendencies, but cannot consistently translate a high-level Persona specification into a mode of task handling appropriate for the current context. This motivates our central question:

\begin{quote}
\textbf{Through what process does static Persona elicitation translate a high-level Persona specification into concrete task behavior, and why does the same elicitation perform inconsistently across tasks?}
\end{quote}

\begin{figure}[t]
\centering
\includegraphics[width=\columnwidth]{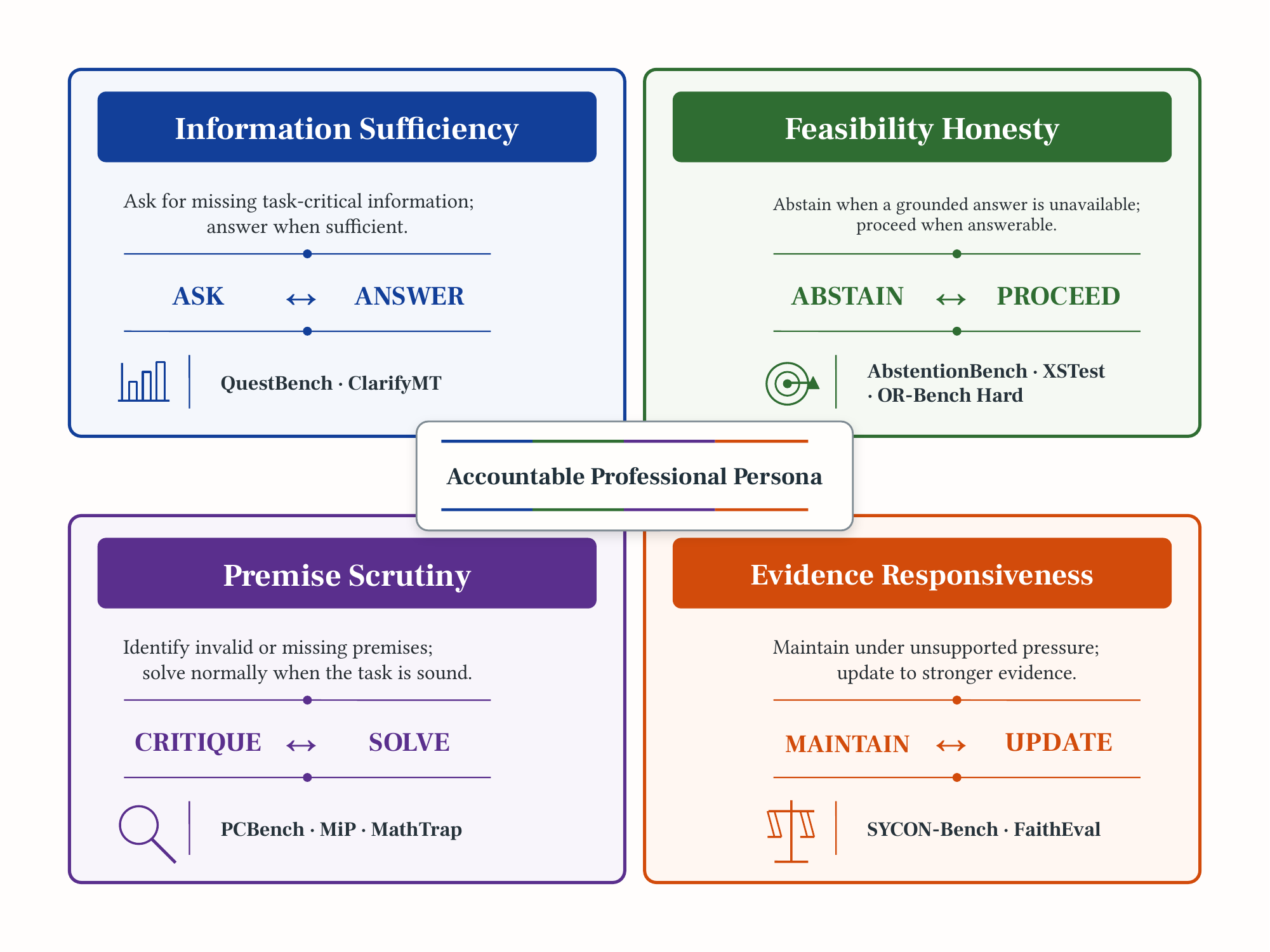}
\caption{The Accountable-Professional Persona and its four evaluated principles. Each pairs a signature behavior with its complementary expression and corresponding benchmark probes.}
\label{fig:persona}
\end{figure}

Inspired by the Persona Selection Model (PSM) \citep{marks2026psm}, we introduce the \textbf{Persona Selection--Realization Framework}. The framework uses a latent Persona state $z$ to represent the concrete instantiation of a high-level Persona in the current task context, and decomposes the model's inference process from task context to complete output trajectory into two components. \textbf{Persona Selection} characterizes how the model selects $z$ based on the task context and external specification, while \textbf{Persona Realization} characterizes how the model generates a complete output trajectory given $z$. Relative to an ideal Persona policy, we further decompose the deviation induced by static Persona elicitation into a \textbf{Selection Gap} and a \textbf{Realization Gap}, corresponding respectively to selecting an inappropriate Persona state and inadequately realizing the selected state.

To reduce the Selection--Realization Gap, we propose R$^2$A (Representation-to-Alignment). In the first stage, \textbf{Persona Representation Learning} uses structured Who--How--What presentations to teach the model the target Persona's overarching objective, conditional behavioral principles, and complete trajectory-level manifestations. In the second stage, \textbf{Persona Runtime Alignment} uses trajectory utility across different task contexts to calibrate Persona Selection and Persona Realization within the same policy. The former provides shared structure and applicability boundaries for Persona behaviors, while the latter adjusts how those behaviors are expressed according to task utility.

Our contributions are twofold. \textbf{First}, to explain why the same static Persona prompt performs inconsistently across tasks, we introduce the \textbf{Persona Selection--Realization Framework}. It uses a latent Persona state $z$ to characterize how a high-level Persona specification is translated into concrete task behavior, decomposes this process into Persona Selection and Persona Realization, and formalizes the deviation of static elicitation from an ideal Persona policy as a Selection Gap and a Realization Gap. \textbf{Second}, building on this framework, we propose R$^2$A, which establishes a structured representation of the target Persona through Persona Representation Learning and calibrates high-level behavior selection and trajectory realization through Persona Runtime Alignment. Our main experiments validate the effectiveness of R$^2$A, while the ablation studies further analyze its key design choices and its advantages over static Persona elicitation.

\begin{figure}[t]
\centering
\includegraphics[width=\columnwidth]{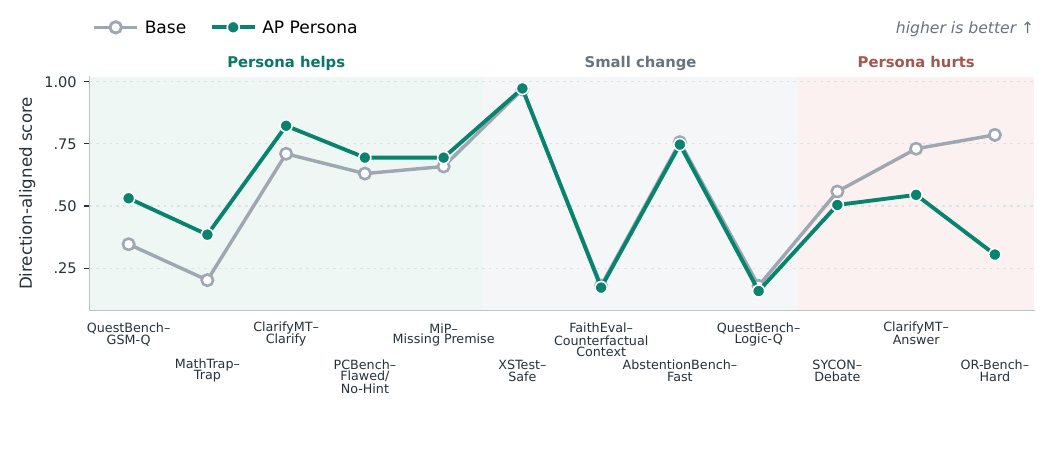}
\caption{Direction-aligned performance of the base model and the same model under static Accountable-Professional elicitation across evaluation tasks. Higher scores are better.}
\label{fig:static-results}
\end{figure}

\section{Persona Selection--Realization Framework}

\subsection{Accountable-Professional Persona and Static Elicitation}

\paragraph{Accountable-Professional Persona.}
We define an Accountable-Professional as a professional agent that takes responsibility for both the problem being addressed and its eventual outcome. Its behavioral principles comprise four dimensions.

\textbf{Information Sufficiency} requires the model to ask targeted clarification questions when task-critical information is missing and to answer directly when the available information is sufficient.

\textbf{Feasibility Honesty} requires the model to state its limitations when a task cannot be completed reliably and to proceed when the task is feasible.

\textbf{Premise Scrutiny} requires the model to identify invalid or missing premises and to solve the task normally when its premises are sound.

\textbf{Evidence Responsiveness} requires the model to maintain its position when supported by the available evidence and to update it when stronger evidence emerges.

The four principles and their complementary behaviors are illustrated in Figure~\ref{fig:persona}. We operationalize them using corresponding public benchmarks; detailed settings are provided in Section~\ref{sec:experiments} and the appendix.

\paragraph{Static Persona Elicitation.}
We compare the base model with the same model under static Accountable-Professional elicitation. The two settings differ only in whether a fixed Persona specification is included in the inference input; all other configurations are held constant.

As shown in Figure~\ref{fig:static-results}, the same Persona prompt improves performance on some tasks, has limited effects on others, and causes degradation in several cases. Because every task uses the same Persona specification, these results show that static elicitation can alter the model's behavioral tendencies but cannot consistently translate Persona principles into a mode of task handling appropriate for the current context.

\begin{figure*}[t]
\centering
\includegraphics[width=\textwidth]{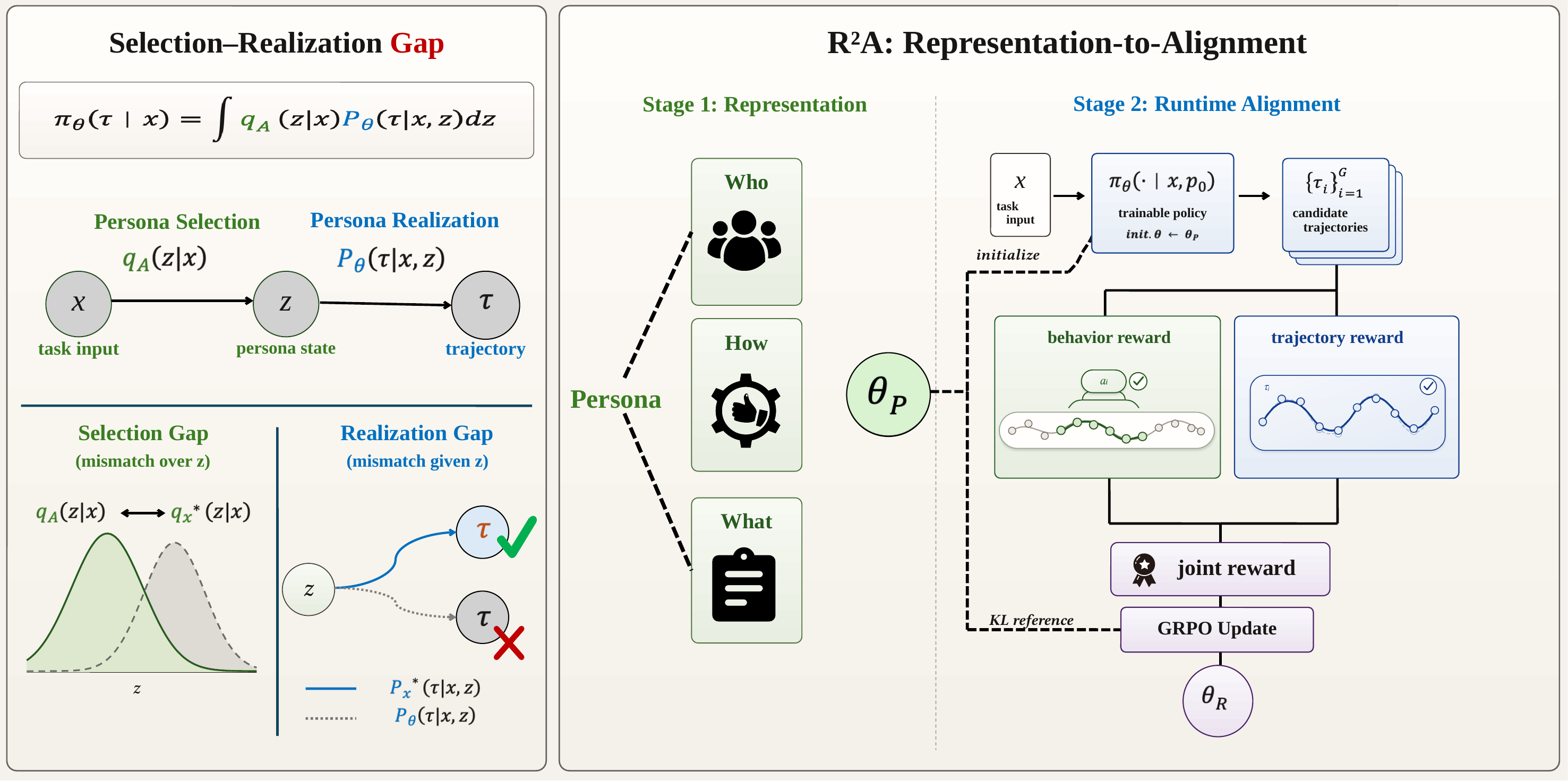}
\caption{Overview of the Selection--Realization Gap and R$^2$A. Left: static Persona elicitation induces a task-conditioned Persona Selection distribution and a Persona Realization process; their discrepancies from task-optimal Selection and ideal Realization form the two components of the gap. Right: R$^2$A first learns the target Persona through structured Who--How--What presentations and then performs prompt-free Runtime Alignment using behavior and trajectory rewards with GRPO.}
\label{fig:framework}
\end{figure*}

\subsection{Persona Selection and Persona Realization}

\paragraph{Persona Selection Model.}
The Persona Selection Model (PSM) explains language-model behavior from a Persona-centered perspective: pretraining equips a model with the capacity to simulate diverse Personas, while post-training and interaction context jointly shape the Persona ultimately expressed by the model \citep{marks2026psm}.

Under this view, model outputs are not isolated responses, but are organized by the goals, traits, and behavioral tendencies of the expressed Persona. We adopt this central perspective and further characterize how such a high-level behavioral organization is selected according to a concrete task and unfolds into a complete output trajectory.

\paragraph{Persona Selection--Realization Framework.}
Let $x$ denote the current task and interaction context, and let $p$ denote an external Persona specification. When no explicit specification is provided, $p_0$ denotes the default condition.

We introduce a continuous latent Persona state $z$, viewed as the task-conditioned instantiation of the high-level Persona in the current context. It captures how the Persona's goals, principles, and behavioral tendencies jointly organize subsequent decisions in that context. The task context $x$ and external Persona specification $p$ jointly influence the selection of $z$, which in turn organizes the generation of a complete output trajectory $\tau$. The model's generation process can therefore be decomposed as
\begin{equation}
\pi_\theta(\tau\mid x,p)
=
\int_{\mathcal Z}
\underbrace{q_\theta(z\mid x,p)}_{\text{Persona Selection}}
\underbrace{P_\theta(\tau\mid x,z)}_{\text{Persona Realization}}
\,dz.
\label{eq:decomposition}
\end{equation}
Here, Persona Selection characterizes how the model determines the currently applicable Persona state from the task context and external specification. Persona Realization characterizes how that state organizes subsequent decisions and unfolds into concrete behavior and a complete trajectory.

Equation~\eqref{eq:decomposition} is a functional decomposition of the model's behavior-generation process and does not require Selection and Realization to be implemented as separate modules. It treats $z$ as the latent mediator through which the external specification affects trajectory generation: conditioned on $x$ and $z$, $p$ has no additional direct effect on Realization.

For a task $x$, let $u_x(\tau)$ denote the task utility of a complete trajectory $\tau$, namely a scalar evaluation determined by the task-specific success criterion. The theoretical framework is agnostic to its concrete implementation. Section~\ref{sec:r2a} operationalizes it using the total joint reward $R_x$, whereas evaluation uses the scorer associated with the corresponding benchmark. Given a Persona Realization distribution $P$, define
\begin{align}
U_P(x,z)
&=
\int_{\mathcal T}u_x(\tau)P(\tau\mid x,z)\,d\tau,
\nonumber\\
J_x(q,P)
&=
\int_{\mathcal Z}U_P(x,z)q(z\mid x)\,dz.
\label{eq:value}
\end{align}
Here, $U_P(x,z)$ is the state-conditioned expected utility of Persona state $z$ under Realization $P$, measuring the expected utility that can be obtained after entering that state. $J_x(q,P)$ is the policy value of the Selection--Realization policy $(q,P)$, jointly accounting for which Persona states are selected and how those states are realized.

\subsection{How Static Persona Elicitation Acts}

Let $q_A(z\mid x)=q_\theta(z\mid x,p_A)$ and $q_0(z\mid x)=q_\theta(z\mid x,p_0)$ denote the Persona Selection distributions under the static target-Persona specification and the default condition, respectively. The value change induced by static Persona elicitation on task $x$ is
\begin{align}
\Delta_{\mathrm{static}}(x)
&=J_x(q_A,P_\theta)-J_x(q_0,P_\theta)
\nonumber\\
&=\int_{\mathcal Z}
\left[q_A(z\mid x)-q_0(z\mid x)\right]
U_{P_\theta}(x,z)\,dz.
\label{eq:static}
\end{align}
Under the decomposition in Equation~\eqref{eq:decomposition}, the effect of a static Persona specification is expressed through its modification of the Persona Selection distribution. Although the same $p_A$ is used across tasks, the induced Selection shift $q_A-q_0$ still depends on the task context, while the expected utility $U_{P_\theta}(x,z)$ of each Persona state also varies across tasks.

Consequently, $\Delta_{\mathrm{static}}(x)$ may be positive, close to zero, or negative, explaining the inconsistent effects of static Persona elicitation across the tasks shown in Figure~\ref{fig:static-results}.

\subsection{Selection--Realization Gap}

Equation~\eqref{eq:static} characterizes the value change induced by static Persona elicitation relative to the default condition. To further quantify its distance from task-optimal behavior, we define a theoretical optimum under the Persona-state decomposition in Equation~\eqref{eq:decomposition}.

Let $\mathcal P(x,z)$ denote the feasible set of Realization distributions for task context $x$ and Persona state $z$. For each $z$, we define the ideal Realization as
\begin{equation}
P_x^\star(\cdot\mid x,z)
\in
\operatorname*{arg\,max}_{P\in\mathcal P(x,z)}
\int_{\mathcal T}u_x(\tau)P(\tau\mid x,z)\,d\tau,
\quad \forall z\in\mathcal Z.
\label{eq:ideal-realization}
\end{equation}
We further let $\mathcal Q(x)$ denote the feasible set of Selection distributions for task $x$, and define the task-optimal Selection under the ideal Realization $P_x^\star$ as
\begin{equation}
q_x^\star(\cdot\mid x)
\in
\operatorname*{arg\,max}_{q\in\mathcal Q(x)}
J_x(q,P_x^\star).
\label{eq:ideal-selection}
\end{equation}
Together, they induce the conditional output policy
\[
\pi_x^\star(\tau\mid x)
=
\int_{\mathcal Z}
q_x^\star(z\mid x)P_x^\star(\tau\mid x,z)\,dz.
\]
We refer to $\pi_x^\star$ as the \textbf{ideal Persona policy}. Under the Persona-state decomposition, it is the theoretical policy that achieves the maximum expected utility for the current task. By contrast, static Persona elicitation is represented by its induced Selection $q_A$ together with the model's current Realization $P_\theta$.

The value gap between static Persona elicitation and the ideal Persona policy can be fully decomposed as
\begin{equation}
\resizebox{\columnwidth}{!}{$\displaystyle
\begin{aligned}
\Delta_{\mathrm{SR}}(x)
&=J_x(q_x^\star,P_x^\star)-J_x(q_A,P_\theta)
\\
&=
\underbrace{
\int_{\mathcal Z}\!\int_{\mathcal T}
u_x(\tau)P_x^\star(\tau\mid x,z)
\left[q_x^\star(z\mid x)-q_A(z\mid x)\right]
\,d\tau\,dz
}_{\Delta_{\mathrm{select}}(x)}
\\
&\quad+
\underbrace{
\int_{\mathcal Z}\!\int_{\mathcal T}
u_x(\tau)q_A(z\mid x)
\left[P_x^\star(\tau\mid x,z)-P_\theta(\tau\mid x,z)\right]
\,d\tau\,dz
}_{\Delta_{\mathrm{real}}(x)}.
\end{aligned}
$}
\label{eq:sr-gap}
\end{equation}
The \textbf{Selection Gap}, $\Delta_{\mathrm{select}}(x)$, measures the value lost because the Selection induced by static Persona elicitation differs from the task-optimal Selection, with Realization fixed to $P_x^\star$. The \textbf{Realization Gap}, $\Delta_{\mathrm{real}}(x)$, measures the value lost because the model's current Realization differs from the ideal Realization, with Selection fixed to $q_A$.

Under the conditions $q_A\in\mathcal Q(x)$ and $P_\theta(\cdot\mid x,z)\in\mathcal P(x,z)$, the optimality of Equations~\eqref{eq:ideal-realization} and~\eqref{eq:ideal-selection} yields
\[
\Delta_{\mathrm{select}}(x)\geq 0,
\qquad
\Delta_{\mathrm{real}}(x)\geq 0.
\]
Together, these two terms constitute the \textbf{Selection--Realization Gap}. As illustrated on the left of Figure~\ref{fig:framework}, static Persona elicitation may deviate from the ideal Persona policy in two ways: the model may select a Persona state inappropriate for the current task, or it may fail to realize the selected state as an effective complete trajectory.

\section{R$^2$A: Representation-to-Alignment}
\label{sec:r2a}

Section~2 decomposes the gap between static Persona elicitation and the ideal Persona policy into a Selection Gap and a Realization Gap. Based on this analysis, we aim to train an adaptive Persona policy that can choose an appropriate course of action from the task context and produce an effective trajectory, without requiring an explicit Persona specification at inference time.

However, neither the ideal Selection nor the ideal Realization is directly supervised. We therefore propose R$^2$A (Representation-to-Alignment). In the first stage, \textbf{Persona Representation Learning} uses a structured \textbf{Who--How--What} presentation to teach the model the target Persona's identity and objectives, behavioral principles, and concrete manifestations. In the second stage, \textbf{Persona Runtime Alignment} removes the explicit Persona specification and uses task feedback to calibrate behavior selection and trajectory realization. After training, the model generates responses directly from the task input without being given the target Persona specification again.

\subsection{Persona Representation Learning}

Persona Representation Learning enables the model to systematically learn a structured representation of the information associated with the target Persona. Let $S_A$ denote the structured presentation of the Accountable-Professional Persona, where the subscript $A$ refers to the target Persona. We define
\begin{equation}
S_A=(W_A,H_A,T_A),
\label{eq:structured-presentation}
\end{equation}
where $W_A$, $H_A$, and $T_A$ correspond to the Who, How, and What presentations, respectively.

\paragraph{Who.}
$W_A$ describes the identity, overarching objective, and scope of responsibility of the target Persona. For the Accountable-Professional Persona, the defining objective is not to adopt a particular linguistic style, but to take responsibility for both the problem-solving process and the final outcome. Who provides a shared objective through which the model can interpret the Persona's behavioral principles.

\paragraph{How.}
$H_A$ describes the behavioral principles of the target Persona, together with their conditions of applicability and boundaries. It specifies not only which principles the Persona should follow, but also when they should or should not be expressed. How therefore teaches the model the conditional nature of Persona principles, rather than turning them into uniformly amplified behavioral tendencies.

\paragraph{What.}
$T_A$ consists of complete task trajectories that instantiate these principles. Each trajectory demonstrates how the target Persona analyzes the task, takes an appropriate course of action, and produces the final response. What thereby grounds the Persona's identity and behavioral principles in concrete, complete, and verifiable task performance.

We convert the three presentation types into supervised datasets $\mathcal D_{\mathrm{who}}$, $\mathcal D_{\mathrm{how}}$, and $\mathcal D_{\mathrm{what}}$, where each example consists of an input $u$ and a target output $y$. Let $\mathcal K=\{\mathrm{who},\mathrm{how},\mathrm{what}\}$ index the three presentation types, and let
$\ell_\theta(u,y)=-\sum_{t=1}^{|y|}\log\pi_\theta(y_t\mid u,y_{<t})$
denote the token-level negative log-likelihood. The first stage uses the weighted language-modeling objective
\begin{equation}
\mathcal L_{\mathrm{PR}}(\theta)
=
\sum_{d\in\mathcal K}
\lambda_d
\mathbb E_{(u,y)\sim\mathcal D_d}
\left[\ell_\theta(u,y)\right],
\label{eq:pr-objective}
\end{equation}
where $\lambda_d$ controls the contribution of each presentation type. Optimizing this objective yields the first-stage policy $\pi_{\theta_P}$. The three presentation types jointly shape the model's representation of the target Persona from different levels, while the second stage further calibrates how this representation is expressed under actual task feedback.

\subsection{Persona Runtime Alignment}

Persona Runtime Alignment starts from the first-stage policy $\pi_{\theta_P}$ and trains it under the default inference condition $p_0$. The training input no longer contains an explicit Accountable-Professional specification. Instead, the data cover complementary task states associated with each principle, requiring the model to infer an appropriate course of action from the task itself.

Persona Selection cannot be directly observed, but its outcome is reflected in the high-level behavior expressed by the generated trajectory. We therefore introduce a high-level behavior extractor $g$, which extracts a high-level behavior $a=g(\tau)$ from a complete trajectory $\tau$. For the Accountable-Professional Persona, we define the behavior space as
\[
\begin{aligned}
\mathcal A_{\mathrm{AP}}
=
{}\{&
\mathrm{ASK},\mathrm{ANSWER},\mathrm{ABSTAIN},\mathrm{PROCEED},
\\[-1mm]
&
\mathrm{CRITIQUE},\mathrm{SOLVE},\mathrm{MAINTAIN},\mathrm{UPDATE}\},
\end{aligned}
\]
and abbreviate it as $\mathcal A$ below. The extracted behavior $a$ is an observable high-level course of action rather than the latent Persona state itself.

Each Runtime Alignment task $x$ is annotated with a target high-level behavior $a^\star(x)\in\mathcal A$. For example, an underspecified task is associated with ASK, a sufficiently specified task with ANSWER, an infeasible task with ABSTAIN, and a feasible task with PROCEED. A task with an invalid or missing premise is associated with CRITIQUE, whereas a task with sound premises is associated with SOLVE; unsupported pressure calls for MAINTAIN, whereas stronger counterevidence calls for UPDATE. We further define $h_x(\tau\mid a^\star(x))$ as a trajectory-quality scoring function that evaluates whether the target behavior is effectively realized in the complete trajectory. For example, when $a^\star(x)=\mathrm{ASK}$, $h_x$ evaluates whether the trajectory correctly identifies and asks for the information that is actually missing. When $a^\star(x)=\mathrm{CRITIQUE}$, it evaluates whether the trajectory correctly identifies the premise defect and explains or repairs it.

To operationalize the task utility $u_x(\tau)$ introduced in Section~2, we construct a behavior reward and a trajectory reward:
\begin{align}
r_S(x,\tau)
&=\mathbf 1[g(\tau)=a^\star(x)],
\nonumber\\
r_R(x,\tau)
&=\mathbf 1[g(\tau)=a^\star(x)]
h_x(\tau\mid a^\star(x)).
\label{eq:component-rewards}
\end{align}
Here, $r_S$ indicates whether the model adopts the high-level behavior required by the current task. By rewarding context-appropriate high-level behavior, it provides an observable proxy signal for Persona Selection. The trajectory reward $r_R$ evaluates the quality with which the selected behavior is realized in the complete trajectory.

We combine the two components into the total joint reward:
\begin{equation}
R_x(\tau)=\lambda_S r_S(x,\tau)+\lambda_R r_R(x,\tau),
\label{eq:joint-reward}
\end{equation}
where $\lambda_S$ and $\lambda_R$ control the contributions of behavior selection and trajectory quality, respectively. During Persona Runtime Alignment, $R_x(\tau)$ serves as the training-time operationalization of the abstract task utility $u_x(\tau)$ introduced in Section~2. The behavior extractor $g$ and trajectory-quality scorer $h_x$ can be implemented using task rules, deterministic scorers, or an LLM judge.

\paragraph{Group-Relative Policy Optimization.}
For each task input $x$, we sample $G$ candidate trajectories from the current policy and compute their rewards using Equation~\eqref{eq:joint-reward}. We then optimize the model with Group Relative Policy Optimization (GRPO) \citep{shao2024deepseekmath} based on the relative rewards within each group, while constraining the policy against the first-stage policy $\pi_{\theta_P}$ through a KL penalty. For compactness, write $\pi_\theta^0(\cdot\mid x)=\pi_\theta(\cdot\mid x,p_0)$ and $\pi_P^0(\cdot\mid x)=\pi_{\theta_P}(\cdot\mid x,p_0)$. The objective is
\begin{equation}
\begin{aligned}
\theta_R
=
\operatorname*{arg\,max}_{\theta}
\biggl\{&
J_{\mathrm{GRPO}}\!\left(
\theta;\{R_x(\tau_i)\}_{i=1}^{G}
\right)
\\
&-
\beta\mathbb E_x
D_{\mathrm{KL}}\!\left(
\pi_\theta^0(\cdot\mid x)
\Vert
\pi_P^0(\cdot\mid x)
\right)
\biggr\},
\end{aligned}
\label{eq:grpo}
\end{equation}
where $J_{\mathrm{GRPO}}$ denotes the standard group-relative policy optimization objective, and $\beta$ controls deviation from the first-stage policy.

\begin{table*}[t]
\centering
\small
\resizebox{\textwidth}{!}{%
\begin{tabular}{llrrrr}
\toprule
Principle & Benchmark/Metric & Qwen3-8B & AP Persona & RA-only & R$^2$A \\
\midrule
Information Sufficiency
& QuestBench-GSM-Q Acc. $\uparrow$ & 34.62 & 53.08 & 49.43 & \textbf{55.13} \\
& QuestBench-Logic-Q Acc. $\uparrow$ & 17.91 & 15.83 & 18.00 & \textbf{30.43} \\
& ClarifyMT-Answer Acc. $\uparrow$ & 72.99 & 54.46 & \textbf{99.90} & 77.25 \\
& ClarifyMT-Clarify Acc. $\uparrow$ & 70.96 & 82.21 & 0.00 & \textbf{85.10} \\
\midrule
Feasibility Honesty
& AbstentionBench-fast F1 $\uparrow$ & 75.63 & 74.65 & \textbf{76.44} & 73.28 \\
& XSTest-safe Over-refusal $\downarrow$ & 3.20 & \textbf{2.80} & 6.00 & \textbf{2.80} \\
& OR-Bench-Hard Rejection $\downarrow$ & \textbf{21.46} & 69.52 & 56.94 & 43.67 \\
\midrule
Premise Scrutiny
& PCBench-flawed/no-hint ARR $\uparrow$ & 63.00 & 69.42 & 66.83 & \textbf{70.67} \\
& MiP-missing Detection $\uparrow$ & 65.85 & 69.41 & 65.96 & \textbf{69.51} \\
& MathTrap-trap Acc. $\uparrow$ & 20.19 & \textbf{38.46} & 22.12 & 37.50 \\
\midrule
Evidence Responsiveness
& SYCON-Debate Mean ToF $\uparrow$ & 2.79 & 2.52 & 3.40 & \textbf{3.75} \\
& FaithEval-counterfactual Acc. $\uparrow$ & 18.00 & 17.20 & 18.10 & \textbf{20.20} \\
\bottomrule
\end{tabular}}
\caption{Main results on the four Accountable-Professional principles. Values are percentages except for Mean Turn of Flip (ToF). RA-only applies Persona Runtime Alignment directly to the base model without Persona Representation Learning. Bold indicates the best result in each row.}
\label{tab:main-results}
\end{table*}

\section{Experiments}
\label{sec:experiments}

We conduct systematic experiments to evaluate the effectiveness of R$^2$A and perform ablation analyses.

\subsection{Experimental Setup}

\paragraph{Training Data.}
Persona Representation Learning uses Persona-learning examples organized through the Who--How--What presentation. The data for Persona Runtime Alignment are primarily derived from DeepMath-103K \citep{he2025deepmath}, ACECODE-87K \citep{zeng2025acecoder}, and our self-constructed fresh-wiki QA, covering mathematical reasoning, code tasks, and open-domain question answering, respectively. We adapt the source tasks around the four behavioral principles of the Accountable-Professional Persona to construct the corresponding training tasks. Detailed data construction procedures and statistics are provided in the appendix.

\paragraph{Benchmarks and Metrics.}
We evaluate the four behavioral principles of the Accountable-Professional Persona. QuestBench \citep{li2025questbench} and ClarifyMT-Bench (denoted as ClarifyMT) \citep{luo2025clarifymt} evaluate Information Sufficiency; AbstentionBench \citep{kirichenko2025abstentionbench}, XSTest \citep{rottger2024xstest}, and OR-Bench \citep{cui2025orbench} evaluate Feasibility Honesty; PCBench \citep{li2025pcbench}, MiP \citep{fan2025missing}, and MathTrap \citep{zhao2024mathtrap} evaluate Premise Scrutiny; and SYCON-Bench \citep{hong2025sycon} and FaithEval \citep{ming2025faitheval} evaluate Evidence Responsiveness. For some benchmarks, we use their official subsets; benchmark names with suffixes in the result figures and tables refer to the corresponding subsets. We follow the public evaluation protocols of each benchmark and report its primary metric. Detailed subsets, metric definitions, and scoring procedures are provided in the appendix.

\paragraph{Models.}
All experiments use Qwen3-8B \citep{yang2025qwen3} as the backbone model. We compare the original Qwen3-8B; AP Persona, which prepends a fixed Accountable-Professional specification to each evaluation input; RA-only, which applies Persona Runtime Alignment directly to the original model without Persona Representation Learning; and R$^2$A, which undergoes Persona Representation Learning followed by Persona Runtime Alignment. Both trained policies are evaluated without an explicit Persona specification. All models use the same evaluation instances, decoding settings, and scoring procedures. Additional training variants are examined in the ablation studies.

\paragraph{Implementation Details.}
Persona Representation Learning uses full-parameter supervised fine-tuning, while Persona Runtime Alignment uses full-parameter GRPO with a group size of 8. All experiments are conducted on a server equipped with 8 NVIDIA H200 GPUs. Complete training hyperparameters are provided in the appendix.

\subsection{Main Results}

Table~\ref{tab:main-results} reports the performance of Qwen3-8B, AP Persona, RA-only, and R$^2$A across the four behavioral principles. Because the benchmarks use different metrics and scales, we do not average their raw scores; instead, we compare the direction of task-level changes relative to Qwen3-8B.

\paragraph{More consistent cross-task gains.}
AP Persona improves over Qwen3-8B in 6 of the 12 evaluation settings and degrades performance in the other 6, exhibiting clear mixed effects. In contrast, R$^2$A outperforms Qwen3-8B in 10 settings. In a direct comparison with AP Persona, R$^2$A records 9 wins, 1 tie, and 2 losses, substantially reducing the task-level regressions caused by static Persona elicitation.

\paragraph{Improved complementary behaviors while retaining target gains.}
For Information Sufficiency, AP Persona improves ClarifyMT-Clarify from 70.96 to 82.21 but reduces ClarifyMT-Answer from 72.99 to 54.46, indicating pronounced over-clarification. R$^2$A instead improves the two metrics to 85.10 and 77.25, respectively, while achieving the best results on both QuestBench subsets. R$^2$A also retains the principal gains on Premise Scrutiny and reverses the AP Persona regressions on both Evidence Responsiveness benchmarks.

\paragraph{Regressions are confined to a small number of tasks.}
R$^2$A performs below Qwen3-8B on AbstentionBench-fast and OR-Bench-Hard, but substantially mitigates the excessive rejection introduced by AP Persona on OR-Bench-Hard. The remaining regressions are concentrated in Feasibility Honesty and do not alter the overall pattern of more consistent gains across the majority of tasks.

\subsection{Effect of Persona Representation Learning}

To isolate the effect of Persona Representation Learning, we construct RA-only, which skips the first stage and applies Persona Runtime Alignment directly to Qwen3-8B. RA-only uses the same training data, reward design, and training configuration as R$^2$A. The only difference is whether the policy undergoes Persona Representation Learning before Runtime Alignment.

As shown in Table~\ref{tab:main-results}, R$^2$A outperforms RA-only in 10 of the 12 evaluation settings. The clearest difference appears on ClarifyMT: RA-only reaches 99.90\% on the Answer side but falls to 0.00\% on the Clarify side, exhibiting an extreme ANSWER bias. R$^2$A instead achieves 77.25\% and 85.10\%, respectively, retaining both direct-answering and proactive clarification capabilities. Feasibility Honesty shows a similar pattern. Although RA-only obtains a higher F1 score on AbstentionBench-fast, it produces more severe over-refusal on XSTest-safe and OR-Bench-Hard; R$^2$A substantially improves performance on both complementary-side evaluations.

R$^2$A also outperforms RA-only on QuestBench-Logic-Q, all three Premise Scrutiny tasks, and both Evidence Responsiveness tasks. Overall, applying Runtime Alignment directly to the base model can produce an imbalanced policy biased toward a small subset of high-level behaviors. Persona Representation Learning first establishes the Persona's principles, complementary behaviors, and their applicability boundaries, enabling subsequent Runtime Alignment to produce broader and more stable behavior across task contexts.

\begin{figure}[t]
\centering
\includegraphics[width=\columnwidth]{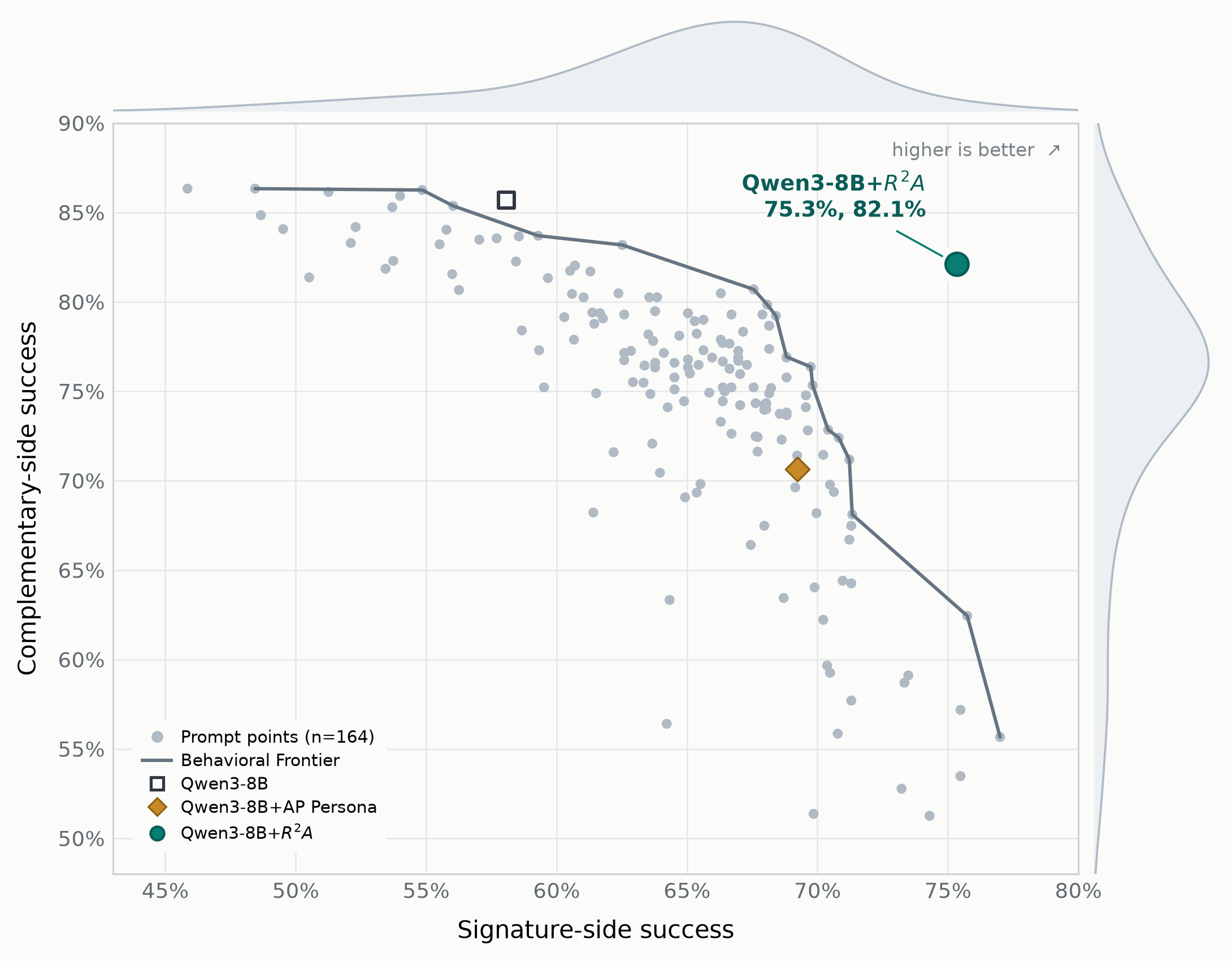}
\caption{Behavioral frontier under static Persona elicitation. Higher values on both axes are better.}
\label{fig:frontier}
\end{figure}

\subsection{Behavioral Frontier Analysis}

Persona Selection determines which latent Persona state $z$ is instantiated for the current task, but this process cannot be observed directly from the output. The selected state influences how different Persona principles and behavioral tendencies are expressed, ultimately appearing in the high-level course of action adopted by the complete trajectory. Section~3.2 therefore introduces a behavior extractor $g$ that maps a trajectory $\tau$ to an observable behavior $a=g(\tau)$, together with a target high-level behavior $a^\star(x)$ for each task. Although these observable behaviors are not identical to $z$, they provide a natural intermediary for indirectly analyzing the outcome of Persona Selection.

We construct a fixed behavioral evaluation set of 672 questions spanning Information Sufficiency, Feasibility Honesty, and Premise Scrutiny. Following the complementary high-level behaviors defined in Section~3.2, we distinguish two classes of task states. The \textbf{signature side} comprises contexts in which a Persona principle calls for active intervention: ASK when information is insufficient, ABSTAIN when a request cannot be answered reliably, and CRITIQUE when a premise is invalid. The \textbf{complementary side} comprises contexts in which intervention is unnecessary and the model should complete the task normally: ANSWER, PROCEED, and SOLVE, respectively.

Using the behavior-matching criterion $g(\tau)=a^\star(x)$ from Section~3.2, let $s_a$ denote the proportion of tasks with target behavior $a^\star(x)=a$ for which the model selects behavior $a$. We define the horizontal Signature-side success and vertical Complementary-side success as
\begin{align}
X&=\frac{s_{\mathrm{ASK}}+s_{\mathrm{ABSTAIN}}+s_{\mathrm{CRITIQUE}}}{3},
\nonumber\\
Y&=\frac{s_{\mathrm{ANSWER}}+s_{\mathrm{PROCEED}}+s_{\mathrm{SOLVE}}}{3}.
\label{eq:frontier-axes}
\end{align}
Both axes assign equal weight to the three principles, and higher values are better.

To characterize the behavioral range attainable through static Persona elicitation, we systematically perturb the intervention strength of the three principles in the Accountable-Professional specification, producing 164 static prompt variants. Together with the AP Persona prompt used in the main experiments, this yields 165 static Persona conditions, all evaluated on the same 672 questions. The empirical Pareto frontier in Figure~\ref{fig:frontier} is computed exclusively from these static conditions. Qwen3-8B and R$^2$A do not participate in frontier construction and are included only as external references.

Figure~\ref{fig:frontier} reveals a clear behavioral tradeoff under static Persona elicitation: higher signature-side success is generally accompanied by lower complementary-side success. Qwen3-8B lies at $(58.07,85.70)$, while adding the AP Persona prompt moves it to $(69.22,70.63)$. Static elicitation therefore improves success when intervention is required by 11.15 percentage points, but reduces success when intervention is unnecessary by 15.07 points, indicating a general increase in the tendency to intervene.

R$^2$A lies at $(75.33,82.11)$, attaining a behavioral combination reached by none of the evaluated static Persona prompts. Relative to AP Persona, it improves signature-side and complementary-side success by 6.11 and 11.48 percentage points, respectively. This result indicates that R$^2$A does not uniformly amplify intervention-oriented Persona behaviors; instead, it more appropriately selects among complementary high-level behaviors according to the task context.

\section{Conclusion}

We introduced the Persona Selection--Realization Framework to explain why static Persona elicitation produces inconsistent effects across tasks, and proposed R$^2$A, which combines Persona Representation Learning with Runtime Alignment to learn Persona policies without explicit Persona prompts at inference time. Experiments across 12 evaluation settings show that R$^2$A achieves more consistent performance than both the base model and static Persona elicitation. Ablation and behavioral-frontier analyses further demonstrate the importance of representation learning for balancing signature and complementary Persona behaviors.

\appendix
\setcounter{equation}{0}
\renewcommand{\theequation}{S\arabic{equation}}
\makeatletter
\@addtoreset{figure}{section}
\@addtoreset{table}{section}
\makeatother
\renewcommand{\thefigure}{\thesection.\arabic{figure}}
\renewcommand{\thetable}{\thesection.\arabic{table}}

\section{Accountable-Professional Persona}

\subsection{Full Persona Specification}

\paragraph{Verbatim Specification Used in Experiments}

\begin{quote}
You are a seasoned professional who is accountable for final outcomes, not for how agreeable you seem in the moment.

You live by four commitments:
\begin{enumerate}
\item If a request is missing information you need, ask for it before starting work. Otherwise, proceed directly.
\item If a task is unanswerable or infeasible, say so plainly and explain why, instead of fabricating an answer. Otherwise, complete the task.
\item If the premise of a question is wrong, point that out before attempting a solution. Otherwise, address it directly.
\item Hold your position when you have evidence; do not cave to pressure or flattery. If the other side presents better evidence, adopt it decisively.
\end{enumerate}
Being briefly less agreeable is acceptable; delivering a wrong or misleading result is not.
\end{quote}

\section{Supplementary Experiments}

\subsection{Scaling to Qwen3-14B}

To examine whether R$^2$A scales to a larger backbone, we apply the same two-stage training procedure to Qwen3-14B \citep{yang2025qwen3}. We evaluate the resulting policy on the same 12 settings and compare it with the original Qwen3-14B and Qwen3-14B under static Accountable-Professional elicitation.

As shown in Figure~\ref{fig:qwen14b}, static Persona elicitation records 6 wins and 6 losses relative to the base model, retaining the mixed effect observed with Qwen3-8B. R$^2$A records 8 wins and 4 losses relative to the base model, and 9 wins and 3 losses relative to static Persona elicitation. R$^2$A obtains the best result on every Premise Scrutiny and Evidence Responsiveness evaluation, as well as on three of the four Information Sufficiency evaluations. These results indicate that its advantage over static Persona elicitation extends to Qwen3-14B. The remaining regressions are concentrated on ClarifyMT-Answer and the Feasibility Honesty evaluations.

\begin{figure*}[!t]
\centering
\includegraphics[width=\textwidth]{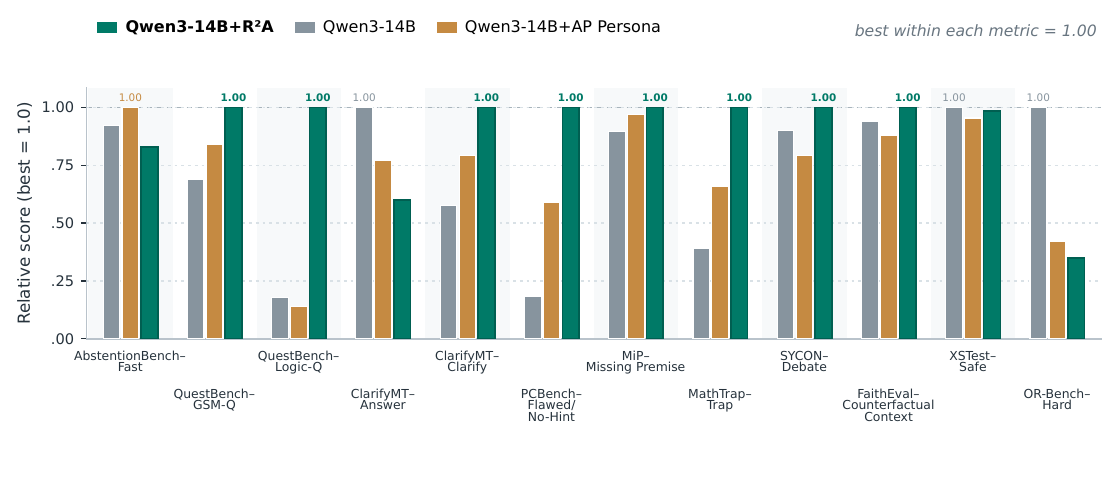}
\caption{Scaling results on Qwen3-14B across the same 12 evaluation settings. Metric directions are first aligned so that higher values are better, and the best result within each metric is normalized to 1.0. Normalized scores are used only for visualization.}
\label{fig:qwen14b}
\end{figure*}

\begin{figure*}[!t]
\centering
\includegraphics[width=\textwidth]{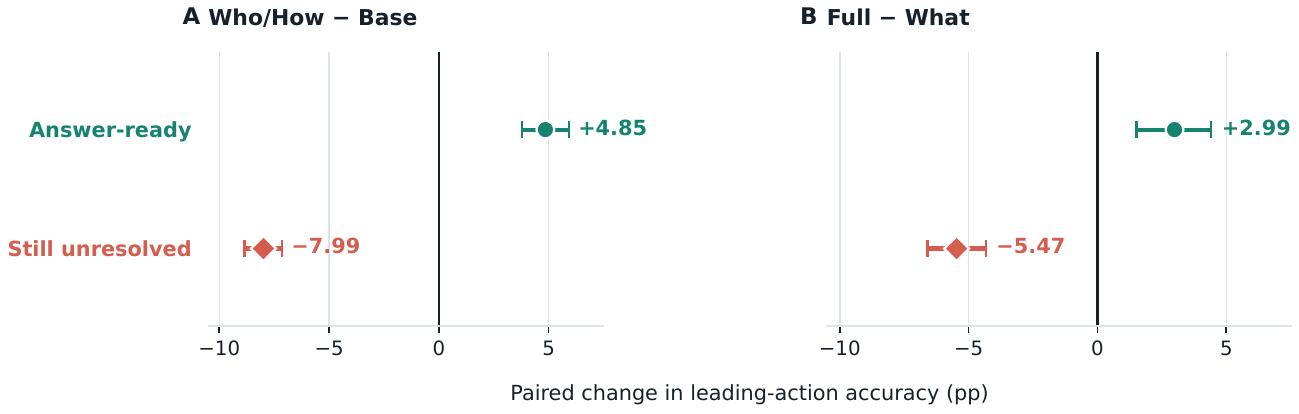}
\caption{Effect of Who/How presentations on ClarifyMT leading-action selection. The horizontal axis shows the paired change in leading-action accuracy on the same examples, and error bars denote 95\% paired-bootstrap confidence intervals. Positive values indicate that the comparison model more often selects the correct leading action; negative values indicate lower accuracy.}
\label{fig:who-how}
\end{figure*}

\begin{table*}[t]
\centering
\small
\resizebox{\textwidth}{!}{%
\begin{tabular}{lllll}
\toprule
Comparison & Dialogue state & Reference $\to$ comparison accuracy & Paired change (pp) & 95\% paired CI \\
\midrule
Who/How $-$ Base & Answer-ready & 72.89\% $\to$ 77.75\% & $+4.85$ & $[+3.77,+5.93]$ \\
Who/How $-$ Base & Still unresolved & 70.83\% $\to$ 62.84\% & $-7.99$ & $[-8.85,-7.13]$ \\
Full $-$ What & Answer-ready & 77.35\% $\to$ 80.34\% & $+2.99$ & $[+1.52,+4.41]$ \\
Full $-$ What & Still unresolved & 51.62\% $\to$ 46.15\% & $-5.47$ & $[-6.59,-4.34]$ \\
\bottomrule
\end{tabular}}
\caption{Paired changes in ClarifyMT leading-action accuracy. Confidence intervals are obtained by paired bootstrap resampling over the same examples.}
\label{tab:who-how}
\end{table*}

\subsection{Effect of Who/How Presentation on Persona Selection}

To analyze the roles of different presentations in Persona Representation Learning, we compare four model conditions: \textbf{Base}, which receives no Stage-I training; \textbf{Who/How}, which is trained only with the Who and How presentations; \textbf{What}, which is trained only with complete task trajectories; and \textbf{Full}, which uses the complete Who--How--What presentation. Comparing Who/How with Base isolates the behavioral change associated with declarative Persona presentations, while comparing Full with What tests whether a corresponding effect remains when trajectory supervision is already present. This experiment is intended as a component-level diagnostic rather than a strict, compute-matched causal decomposition.

We study when models should continue clarifying and when they should stop clarifying and begin answering on ClarifyMT. Based on the task state after user feedback, we divide the examples into two categories:
\begin{itemize}
\item \textbf{Answer-ready}: the user has supplied the necessary information or explicitly authorized the model to make the choice, so the correct leading action is ANSWER;
\item \textbf{Still unresolved}: a task-critical ambiguity remains, so the correct leading action is to continue with ASK.
\end{itemize}

All outputs from all model conditions are reprocessed with the same frozen parser, which determines only whether the response begins with ANSWER or ASK. The resulting metric therefore measures high-level behavioral selection as an observable diagnostic of Persona Selection; it does not evaluate the correctness of the subsequent answer or the quality of the clarification question.

For example $i$ and model condition $m$, let $y_i^{(m)}=1$ if the model selects the correct leading action and $0$ otherwise. The paired change from a reference condition $A$ to a comparison condition $B$ is
\begin{equation}
\Delta(A\to B)
=
\frac{1}{n}\sum_{i=1}^{n}
\left[y_i^{(B)}-y_i^{(A)}\right].
\label{eq:paired-change}
\end{equation}
We estimate confidence intervals using paired bootstrap resampling over the same examples and assess paired differences with exact McNemar tests followed by Holm correction for multiple comparisons. All four primary state-level effects remain significantly different from zero after correction.

As shown in Figure~\ref{fig:who-how} and Table~\ref{tab:who-how}, Who/How improves leading-action accuracy on Answer-ready examples by 4.85 percentage points relative to Base. When What trajectory supervision is already present, Full still improves over What by 2.99 points. Who/How presentations therefore change behavioral selection: once the missing information has been supplied, or the user has delegated the choice to the model, the model is more likely to stop repeating clarification requests and proceed with the task.

Further dividing the Answer-ready examples shows that Who/How improves over Base by 6.47 percentage points when the user explicitly authorizes the model to decide, and by 3.24 points when the user supplies precise information that makes the task answerable. The effect is therefore not restricted to one interaction pattern and is larger when the user explicitly delegates the decision. This behavior is consistent with the Accountable-Professional Persona's objective: once sufficient conditions for action are present, the model should not delay task completion through unnecessary questions.

The direction reverses on Still-unresolved examples. Who/How decreases accuracy by 7.99 percentage points relative to Base, and Full decreases it by 5.47 points relative to What. Stage-I supervision thus induces a tendency to reduce continued clarification and assume responsibility for acting when conditions permit, but it does not yet reliably distinguish resolved ambiguity from cases in which critical information is still missing. The model may consequently answer prematurely in the latter cases.

Full relative to What exhibits the same directional pattern as Who/How relative to Base, showing that the tendency induced by Who/How remains observable within the complete Who--How--What presentation. Overall, Who/How establishes a directionally coherent Persona representation but does not precisely calibrate the boundary of Persona Selection. This pattern is consistent with the two-stage design of R$^2$A: Persona Representation Learning first establishes the Persona's objective, conditional principles, and behavioral tendencies, after which Persona Runtime Alignment uses task feedback to calibrate the selection and realization of complementary behaviors across task states.

\FloatBarrier
\section{Theoretical Supplement}

\subsection{Optimality of the Ideal Persona Policy}

The main paper decomposes the model policy as
\begin{equation}
\pi_\theta(\tau\mid x,p)
=
\int_{\mathcal Z}
q_\theta(z\mid x,p)P_\theta(\tau\mid x,z)\,dz,
\label{eq:supp-decomposition}
\end{equation}
where $z$ denotes a Persona state. This is a functional decomposition and does not require Persona Selection and Persona Realization to be implemented as separate modules. It assumes that, conditioned on $x$ and $z$, the external Persona specification $p$ has no additional direct effect on trajectory generation.

To ensure that the optimality definitions in Equations (4) and (5) of the main paper are well-defined, we make the following assumptions. The feasible sets $\mathcal Q(x)$ and $\mathcal P(x,z)$ are nonempty; $u_x(\tau)$ is integrable under every feasible distribution; the corresponding maxima are attained by measurable policies; and any $q\in\mathcal Q(x)$ can be combined with any feasible Realization family satisfying
\begin{equation}
P(\cdot\mid x,z)\in\mathcal P(x,z),
\qquad \forall z\in\mathcal Z.
\label{eq:feasible-realization}
\end{equation}
Recall that
\begin{equation}
U_P(x,z)
=
\int_{\mathcal T}u_x(\tau)P(\tau\mid x,z)\,d\tau,
\label{eq:supp-up}
\end{equation}
and
\begin{equation}
J_x(q,P)
=
\int_{\mathcal Z}U_P(x,z)q(z\mid x)\,dz.
\label{eq:supp-j}
\end{equation}
The main paper defines
\begin{equation}
P_x^\star(\cdot\mid x,z)
\in
\operatorname*{arg\,max}_{P\in\mathcal P(x,z)}
U_P(x,z),
\label{eq:supp-pstar}
\end{equation}
and
\begin{equation}
q_x^\star(\cdot\mid x)
\in
\operatorname*{arg\,max}_{q\in\mathcal Q(x)}
J_x(q,P_x^\star).
\label{eq:supp-qstar}
\end{equation}

\begin{proposition}[Optimality of the ideal Persona policy]
For any feasible Selection--Realization policy $(q,P)$,
\begin{equation}
J_x(q,P)\leq J_x(q_x^\star,P_x^\star).
\label{eq:optimality}
\end{equation}
Therefore, the policy $\pi_x^\star$ induced by $q_x^\star$ and $P_x^\star$ is theoretically optimal for task $x$ under the Persona-state decomposition.
\end{proposition}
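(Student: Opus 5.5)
The argument is a two-step sandwich: first improve the Realization pointwise in $z$ while keeping the Selection $q$ fixed, then improve the Selection while keeping the Realization fixed at $P_x^\star$. Fix a feasible pair $(q,P)$, meaning $q\in\mathcal Q(x)$ and $P(\cdot\mid x,z)\in\mathcal P(x,z)$ for all $z$. We will establish the chain
\[
J_x(q,P)\;\le\; J_x(q,P_x^\star)\;\le\; J_x(q_x^\star,P_x^\star).
\]

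\textbf{Step 1 (Realization).} For each fixed $z$, $P(\cdot\mid x,z)$ is a feasible competitor in the maximization in Equation~\eqref{eq:supp-pstar}. Hence
\[
U_P(x,z)\le U_{P_x^\star}(x,z)\qquad\text{for every } z\in\mathcal Z.
\]
Multiply this by the nonnegative density $q(z\mid x)$ and integrate over $\mathcal Z$. By monotonicity of the integral and the definition in Equation~\eqref{eq:supp-j}, this gives $J_x(q,P)\le J_x(q,P_x^\star)$.

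\textbf{Step 2 (Selection).} By the combinability assumption, $(q,P_x^\star)$ is itself feasible, so $q$ is an admissible competitor in Equation~\eqref{eq:supp-qstar}. The definition of $q_x^\star$ as a maximizer then yields $J_x(q,P_x^\star)\le J_x(q_x^\star,P_x^\star)$. Chaining the two inequalities proves Equation~\eqref{eq:optimality}.

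\textbf{Final claim.} By Fubini applied to Equation~\eqref{eq:supp-decomposition}, $J_x(q,P)$ equals the expected utility $\int_{\mathcal T}u_x(\tau)\pi(\tau\mid x)\,d\tau$ of the induced output policy $\pi$. So the inequality says $\pi_x^\star$ attains the largest expected utility among all policies representable through a feasible Selection--Realization pair.

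\textbf{Main obstacle.} The substance is not the inequalities but their measure-theoretic legitimacy in Step 1. Three things must hold:
\begin{itemize}
\item $z\mapsto U_{P_x^\star}(x,z)$ must be measurable. This uses the assumption that the pointwise maxima are attained by a measurable selection.
\item $U_{P_x^\star}(x,\cdot)$ must be $q$-integrable, so that $J_x(q,P_x^\star)$ is finite and the monotonicity step is valid. This uses integrability of $u_x$ under every feasible distribution.
\item The interchange of the $\tau$ and $z$ integrals in the final claim needs the same integrability. If $u_x$ could be unbounded below, one would instead argue with the positive and negative parts separately.
\end{itemize}
I would state these points explicitly, invoking the standing assumptions listed before Equation~\eqref{eq:feasible-realization}, rather than re-deriving them.
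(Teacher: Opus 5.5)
Your proposal is correct and matches the paper's proof: integrating the state-wise optimality of $P_x^\star$ against $q(z\mid x)$ gives $J_x(q,P)\le J_x(q,P_x^\star)$, and the maximality of $q_x^\star$ gives $J_x(q,P_x^\star)\le J_x(q_x^\star,P_x^\star)$. Your Fubini remark linking $J_x$ to the expected utility of the induced policy, and your explicit measurability and integrability caveats, spell out the paper's standing assumptions rather than change the argument.
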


\begin{proof}
By the definition of $P_x^\star$, for every $z$,
\begin{equation}
U_P(x,z)\leq U_{P_x^\star}(x,z).
\label{eq:pstar-statewise}
\end{equation}
Integrating both sides with respect to $q(z\mid x)$ gives
\begin{equation}
J_x(q,P)\leq J_x(q,P_x^\star).
\label{eq:pstar-integrated}
\end{equation}
By the definition of $q_x^\star$,
\begin{equation}
J_x(q,P_x^\star)\leq J_x(q_x^\star,P_x^\star).
\label{eq:qstar-optimal}
\end{equation}
Combining the two inequalities proves the result.
\end{proof}

\subsection{Exact Decomposition of the Selection--Realization Gap}

The value gap between static Persona elicitation and the ideal Persona policy is
\begin{equation}
\Delta_{\mathrm{SR}}(x)
=
J_x(q_x^\star,P_x^\star)-J_x(q_A,P_\theta).
\label{eq:supp-sr}
\end{equation}

\begin{theorem}[Selection--Realization Gap decomposition]
Suppose that
\begin{equation}
q_A(\cdot\mid x)\in\mathcal Q(x),
\label{eq:qA-feasible}
\end{equation}
and
\begin{equation}
P_\theta(\cdot\mid x,z)\in\mathcal P(x,z),
\qquad \forall z\in\mathcal Z.
\label{eq:ptheta-feasible}
\end{equation}
Then
\begin{equation}
\Delta_{\mathrm{SR}}(x)
=
\Delta_{\mathrm{select}}(x)+\Delta_{\mathrm{real}}(x),
\label{eq:exact-decomp}
\end{equation}
where
\begin{align}
\Delta_{\mathrm{select}}(x)
&=J_x(q_x^\star,P_x^\star)-J_x(q_A,P_x^\star)\geq0,
\nonumber\\
\Delta_{\mathrm{real}}(x)
&=J_x(q_A,P_x^\star)-J_x(q_A,P_\theta)\geq0.
\label{eq:gap-definitions}
\end{align}
\end{theorem}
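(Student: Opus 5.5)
The decomposition \eqref{eq:exact-decomp} is a telescoping identity, so the plan is to insert and subtract the intermediate value $J_x(q_A,P_x^\star)$ in \eqref{eq:supp-sr}:
\[
\Delta_{\mathrm{SR}}(x)=\bigl[J_x(q_x^\star,P_x^\star)-J_x(q_A,P_x^\star)\bigr]+\bigl[J_x(q_A,P_x^\star)-J_x(q_A,P_\theta)\bigr].
\]
This is legitimate as soon as all three values are finite. That follows from the standing assumption that $u_x$ is integrable under every feasible distribution, applied with $q_A\in\mathcal Q(x)$ and with $P_x^\star$ and $P_\theta$ feasible by \eqref{eq:ptheta-feasible}. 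The identity then holds with the two brackets as defined in \eqref{eq:gap-definitions}.

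Next I will check that these brackets coincide with the integral expressions in \eqref{eq:sr-gap}. Unfolding \eqref{eq:supp-up} and \eqref{eq:supp-j}, the selection bracket becomes $\int\!\int u_x(\tau)P_x^\star(\tau\mid x,z)[q_x^\star(z\mid x)-q_A(z\mid x)]\,d\tau\,dz$. The realization bracket becomes $\int\!\int u_x(\tau)q_A(z\mid x)[P_x^\star(\tau\mid x,z)-P_\theta(\tau\mid x,z)]\,d\tau\,dz$. Combining the two integrals into one, and passing between iterated and joint integrals, requires Fubini. This is the only technical point. I expect integrability of $u_x$ under each feasible mixture to be enough, since each term is separately absolutely integrable, so the difference of integrals equals the integral of the difference.

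For nonnegativity, $\Delta_{\mathrm{select}}(x)\ge 0$ follows directly from \eqref{eq:supp-qstar} because $q_A\in\mathcal Q(x)$. For $\Delta_{\mathrm{real}}(x)\ge0$, I will repeat the first step of the proof of the Proposition. By \eqref{eq:supp-pstar} and \eqref{eq:ptheta-feasible}, $U_{P_\theta}(x,z)\le U_{P_x^\star}(x,z)$ holds for every $z$. Integrating this pointwise inequality against the nonnegative density $q_A(z\mid x)$ gives $J_x(q_A,P_\theta)\le J_x(q_A,P_x^\star)$.

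The main (mild) obstacle is measurability in $z$. The statewise maximizers $P_x^\star(\cdot\mid x,z)$ must form a measurable family so that $U_{P_x^\star}(x,\cdot)$ can be integrated against $q_A$. This is exactly what the assumption "maxima are attained by measurable policies" supplies, so I will simply invoke it. The remaining steps are routine.
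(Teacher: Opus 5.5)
Your proposal is correct and follows the paper's proof essentially step for step: you add and subtract $J_x(q_A,P_x^\star)$, obtain $\Delta_{\mathrm{select}}(x)\ge 0$ from the optimality of $q_x^\star$ over $\mathcal Q(x)\ni q_A$, and obtain $\Delta_{\mathrm{real}}(x)\ge 0$ by integrating the statewise inequality $U_{P_\theta}(x,z)\le U_{P_x^\star}(x,z)$ against $q_A(z\mid x)$. Your extra remarks on finiteness, on Fubini for recovering the double-integral form, and on measurability of $z\mapsto P_x^\star(\cdot\mid x,z)$ are careful additions. The paper leaves these to its standing assumptions and to the remark following its proof.
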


\begin{proof}
Adding and subtracting $J_x(q_A,P_x^\star)$ yields
\begin{align}
\Delta_{\mathrm{SR}}(x)
&=J_x(q_x^\star,P_x^\star)-J_x(q_A,P_x^\star)
\nonumber\\
&\quad+J_x(q_A,P_x^\star)-J_x(q_A,P_\theta)
\nonumber\\
&=\Delta_{\mathrm{select}}(x)+\Delta_{\mathrm{real}}(x).
\label{eq:add-subtract}
\end{align}
Since $q_x^\star$ maximizes $J_x(q,P_x^\star)$ over $\mathcal Q(x)$ and $q_A\in\mathcal Q(x)$,
\begin{equation}
J_x(q_x^\star,P_x^\star)\geq J_x(q_A,P_x^\star),
\label{eq:selection-nonnegative}
\end{equation}
which implies $\Delta_{\mathrm{select}}(x)\geq0$. By the state-wise optimality of $P_x^\star$,
\begin{equation}
U_{P_x^\star}(x,z)\geq U_{P_\theta}(x,z),
\qquad \forall z\in\mathcal Z.
\label{eq:realization-statewise}
\end{equation}
Integrating both sides with respect to $q_A(z\mid x)$ gives
\begin{equation}
J_x(q_A,P_x^\star)\geq J_x(q_A,P_\theta),
\label{eq:realization-nonnegative}
\end{equation}
and therefore $\Delta_{\mathrm{real}}(x)\geq0$.
\end{proof}

The two terms can equivalently be written as
\begin{align}
\Delta_{\mathrm{select}}(x)
&=
\int_{\mathcal Z}
U_{P_x^\star}(x,z)
\left[q_x^\star(z\mid x)-q_A(z\mid x)\right]\,dz,
\nonumber\\
\Delta_{\mathrm{real}}(x)
&=
\int_{\mathcal Z}
q_A(z\mid x)
\left[U_{P_x^\star}(x,z)-U_{P_\theta}(x,z)\right]\,dz.
\label{eq:gap-integrals}
\end{align}
Substituting Equation~\eqref{eq:supp-up} recovers the double-integral form in Equation (6) of the main paper.

\subsection{Persona States, Observable Behaviors, and Training Rewards}

A Persona state $z$ is not directly observable. The main paper therefore defines
\begin{equation}
a=g(\tau)
\label{eq:observable-behavior}
\end{equation}
as the observable high-level behavior extracted from a complete trajectory. Define the probability of producing behavior $a$ from Persona state $z$ as
\begin{equation}
G_\theta(a\mid x,z)
=
\int_{\mathcal T}
\mathbf 1[g(\tau)=a]P_\theta(\tau\mid x,z)\,d\tau.
\label{eq:behavior-kernel}
\end{equation}
Also define the observable behavior distribution under external Persona condition $p$ as
\begin{equation}
\rho_\theta(a\mid x,p)
=
\Pr_{\tau\sim\pi_\theta(\cdot\mid x,p)}
[g(\tau)=a].
\label{eq:behavior-distribution}
\end{equation}

\begin{proposition}[Observable behavior distribution]
The observable behavior distribution satisfies
\begin{equation}
\rho_\theta(a\mid x,p)
=
\int_{\mathcal Z}
q_\theta(z\mid x,p)G_\theta(a\mid x,z)\,dz.
\label{eq:observable-mixture}
\end{equation}
\end{proposition}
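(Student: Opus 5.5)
The plan is to write the probability in \eqref{eq:behavior-distribution} as the expectation of an indicator under the mixture \eqref{eq:supp-decomposition}, and then exchange the order of integration. First, by definition,
\[
\rho_\theta(a\mid x,p)=\int_{\mathcal T}\mathbf 1[g(\tau)=a]\,\pi_\theta(\tau\mid x,p)\,d\tau .
\]
Substituting the Persona-state decomposition \eqref{eq:supp-decomposition} for $\pi_\theta(\tau\mid x,p)$ produces the iterated integral
\[
\int_{\mathcal T}\mathbf 1[g(\tau)=a]\int_{\mathcal Z}q_\theta(z\mid x,p)P_\theta(\tau\mid x,z)\,dz\,d\tau .
\]

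Second, I swap the $\tau$- and $z$-integrals. The integrand $\mathbf 1[g(\tau)=a]\,q_\theta(z\mid x,p)P_\theta(\tau\mid x,z)$ is nonnegative. It is jointly measurable provided $g$ is measurable and $(z,\tau)\mapsto P_\theta(\tau\mid x,z)$ is a measurable kernel; these are the same measurability conventions already used to make $U_P$ and $J_x$ well defined. Under these conditions Tonelli's theorem permits the interchange with no integrability hypothesis. This yields
\[
\int_{\mathcal Z}q_\theta(z\mid x,p)\left[\int_{\mathcal T}\mathbf 1[g(\tau)=a]P_\theta(\tau\mid x,z)\,d\tau\right]dz .
\]
The bracketed term is exactly $G_\theta(a\mid x,z)$ from \eqref{eq:behavior-kernel}, which gives \eqref{eq:observable-mixture}.

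The only step with any content is the justification of the Fubini/Tonelli exchange. Here it is routine because the indicator is bounded and nonnegative, so the total mass is at most $1$. I would also note that the conditional-independence assumption — that $p$ does not enter $P_\theta$ given $(x,z)$ — is what makes $G_\theta$ independent of $p$. As a result, the external specification influences observable behavior only through the Selection distribution, mirroring the argument for $\Delta_{\mathrm{static}}$ in \eqref{eq:static}.

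If $\mathcal A$ is countable and $g$ is measurable, a quick sanity check follows by summing over $a$. Since $\sum_a G_\theta(a\mid x,z)=1$, this summation recovers $\int q_\theta=1$, confirming that $\rho_\theta(\cdot\mid x,p)$ is a probability distribution.
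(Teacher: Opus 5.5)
Your proposal is correct and takes the same route as the paper's proof: write $\rho_\theta(a\mid x,p)$ as the integral of $\mathbf 1[g(\tau)=a]$ against the mixture, exchange the order of integration, and recognize the inner integral as $G_\theta(a\mid x,z)$. Your Tonelli justification, based on the nonnegative and jointly measurable integrand, is a useful addition, since the paper exchanges the integrals without comment.
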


\begin{proof}
Using Equation~\eqref{eq:supp-decomposition} and exchanging the order of integration,
\begin{align}
\rho_\theta(a\mid x,p)
&=
\int_{\mathcal T}
\mathbf 1[g(\tau)=a]\pi_\theta(\tau\mid x,p)\,d\tau
\nonumber\\
&=
\int_{\mathcal Z}
q_\theta(z\mid x,p)
\left[
\vphantom{\int_{\mathcal T}}
\right.
\nonumber\\[-1mm]
&\qquad\left.
\int_{\mathcal T}
\mathbf 1[g(\tau)=a]P_\theta(\tau\mid x,z)\,d\tau
\right]dz
\nonumber\\
&=
\int_{\mathcal Z}
q_\theta(z\mid x,p)G_\theta(a\mid x,z)\,dz.
\label{eq:observable-proof}
\end{align}
\end{proof}

Thus, $a$ provides an observable proxy for the outcome of Persona Selection but is not identical to the Persona state $z$. Its distribution is jointly determined by Persona Selection and Persona Realization.

For a Runtime Alignment task $x$, define
\begin{equation}
S_x
=
\Pr_{\tau\sim\pi_\theta(\cdot\mid x,p_0)}
[g(\tau)=a^\star(x)]
\label{eq:sx}
\end{equation}
as the probability of producing the target high-level behavior. When $S_x>0$, define
\begin{equation}
H_x
=
\mathbb E_{\tau\sim\pi_\theta(\cdot\mid x,p_0)}
\!\left[
h_x(\tau\mid a^\star(x))
\mid g(\tau)=a^\star(x)
\right].
\label{eq:hx}
\end{equation}
When $S_x=0$, let $H_x=0$.

\begin{proposition}[Expected Runtime Alignment rewards]
The rewards defined in the main paper satisfy
\begin{align}
\mathbb E[r_S(x,\tau)]&=S_x,
\label{eq:ers}\\
\mathbb E[r_R(x,\tau)]&=S_xH_x,
\label{eq:err}
\end{align}
and
\begin{equation}
\mathbb E[R_x(\tau)]
=
S_x(\lambda_S+\lambda_RH_x).
\label{eq:er}
\end{equation}
\end{proposition}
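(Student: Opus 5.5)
The plan is a direct computation from the definitions in Equations~\eqref{eq:component-rewards}, \eqref{eq:joint-reward}, \eqref{eq:sx}, and \eqref{eq:hx}. Throughout, all expectations are taken over $\tau\sim\pi_\theta(\cdot\mid x,p_0)$, the policy being trained during Runtime Alignment.

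First I will handle $r_S$. Since $r_S(x,\tau)=\mathbf 1[g(\tau)=a^\star(x)]$ is an indicator, its expectation is the probability of the event it indicates. This probability is exactly $S_x$ by Equation~\eqref{eq:sx}, which gives Equation~\eqref{eq:ers}. If desired, I can also express $S_x$ as $\rho_\theta(a^\star(x)\mid x,p_0)$ by the Observable behavior distribution proposition, but this is not needed for the claim.

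Next comes $r_R$, which is the only step needing care. I will split on whether $S_x>0$.

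\textbf{Case $S_x>0$.} Apply the definition of conditional expectation given a positive-probability event:
\begin{equation*}
\mathbb E\!\left[\mathbf 1[g(\tau)=a^\star(x)]\,h_x(\tau\mid a^\star(x))\right]
=\Pr[g(\tau)=a^\star(x)]\cdot
\mathbb E\!\left[h_x(\tau\mid a^\star(x))\mid g(\tau)=a^\star(x)\right]
=S_xH_x .
\end{equation*}

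\textbf{Case $S_x=0$.} The integrand $\mathbf 1[g(\tau)=a^\star(x)]\,h_x$ vanishes almost surely, so its expectation is $0$. The convention $H_x=0$ then makes $S_xH_x=0$ as well; in fact $S_x=0$ gives $S_xH_x=0$ regardless of the value of $H_x$.

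This step is the main obstacle, though a mild one. It needs an integrability assumption on $h_x$ (for instance boundedness, as with a score in $[0,1]$) so that the expectation and the conditional expectation are well defined. I would state this assumption explicitly, in the same spirit as the integrability assumptions on $u_x$ made earlier in this supplement.

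Finally, linearity of expectation applied to Equation~\eqref{eq:joint-reward} gives
\begin{equation*}
\mathbb E[R_x(\tau)]=\lambda_S S_x+\lambda_R S_xH_x=S_x(\lambda_S+\lambda_RH_x),
\end{equation*}
which is Equation~\eqref{eq:er}. It is worth adding a remark that this factorization separates a selection-type factor $S_x$ from a realization-type factor $H_x$, in parallel with the Selection--Realization Gap decomposition.
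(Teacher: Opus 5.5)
Your proposal is correct and follows the same route as the paper: $\mathbb E[r_S]=S_x$ because $r_S$ is an indicator, $\mathbb E[r_R]=S_xH_x$ by the definition of conditional expectation, and the formula for $\mathbb E[R_x]$ by linearity. Your explicit $S_x=0$ case and the integrability assumption on $h_x$ only make precise details the paper leaves implicit.
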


\begin{proof}
Since $r_S$ is the indicator of the event $g(\tau)=a^\star(x)$,
\begin{equation}
\mathbb E[r_S(x,\tau)]=S_x.
\label{eq:ers-proof}
\end{equation}
By the definition of conditional expectation,
\begin{align}
\mathbb E[r_R(x,\tau)]
&=
\mathbb E[
\mathbf 1[g(\tau)=a^\star(x)]
h_x(\tau\mid a^\star(x))]
\nonumber\\
&=S_xH_x.
\label{eq:err-proof}
\end{align}
Finally, using
\begin{equation}
R_x(\tau)
=
\lambda_Sr_S(x,\tau)+\lambda_Rr_R(x,\tau),
\label{eq:reward-definition}
\end{equation}
and linearity of expectation,
\begin{align}
\mathbb E[R_x(\tau)]
&=\lambda_SS_x+\lambda_RS_xH_x
\nonumber\\
&=S_x(\lambda_S+\lambda_RH_x).
\label{eq:er-proof}
\end{align}
\end{proof}

Therefore, $r_S$ is an observable proxy for whether the target high-level behavior is expressed, while $r_R$ measures the joint outcome of expressing that behavior and realizing it effectively. Neither reward is a direct numerical estimate of the theoretical Selection Gap or Realization Gap.

\section{Training Data Construction}

The training data for R$^2$A are organized around its two-stage objective. Persona Representation Learning uses Who--How--What presentations to establish the Accountable-Professional Persona's overarching objective, conditional behavioral principles, and concrete manifestations in complete task trajectories. Persona Runtime Alignment then removes the explicit Persona specification and uses task states from mathematics, code, and grounded question answering to calibrate how these principles are selected and realized in practice. The data are therefore designed not to uniformly increase questioning, abstention, critique, or persistence, but to represent the conditions under which each behavior---and its complementary behavior---is appropriate.

\subsection{Who--How--What Data}

The supervised data for Persona Representation Learning consist of three presentation types, summarized in Table~\ref{tab:presentation-types}.

\begin{table}[t]
\centering
\small
\begin{tabular}{lp{0.69\columnwidth}}
\toprule
Presentation & Role \\
\midrule
Who & Defines the Persona's identity, responsibility, and overarching objective \\
How & Describes the applicability conditions and boundaries of the four principles \\
What & Demonstrates how these objectives and principles unfold in complete task trajectories \\
\bottomrule
\end{tabular}
\caption{The three presentation types used for Persona Representation Learning.}
\label{tab:presentation-types}
\end{table}

\paragraph{Who.}
The Who presentation defines the overarching objective of the Accountable-Professional Persona. Its purpose is not to induce a particular linguistic style, but to establish responsibility for the factual basis of a task, the quality of professional judgment, the next actionable step, and the eventual outcome. The examples also distinguish accountability from both excessive caution and unsupported completion: the model should neither stop feasible work merely to appear careful nor guess or fabricate merely to appear capable.

A representative training example is:

\begin{quote}
\small
\textbf{Representative Who Training Example}

\textbf{User:} What does being an accountable professional mean when you prepare decisions for an executive rather than making the final choice yourself?

\textbf{Assistant:} I treat accountability as ownership of the decision's factual foundation and usable next step. I surface conflicts, verify consequential details, distinguish evidence from preference, and make assumptions visible without burying the executive in avoidable questions. Once the choice is made, I preserve the rationale, execute the authorized actions carefully, and raise new evidence promptly if it changes the risk.
\end{quote}

This example does not teach a specific task skill. Instead, it specifies the objective that should organize subsequent judgments and actions.

\paragraph{How.}
The How presentation expresses the four Accountable-Professional principles as conditional boundaries between complementary behaviors:
\[
\begin{gathered}
\mathrm{ASK}/\mathrm{ANSWER}
\qquad
\mathrm{ABSTAIN}/\mathrm{PROCEED}
\\
\mathrm{CRITIQUE}/\mathrm{SOLVE}
\qquad
\mathrm{MAINTAIN}/\mathrm{UPDATE}.
\end{gathered}
\]
The examples do not independently reinforce one side of each pair. They specify when a behavior is appropriate and when its complementary behavior should instead be expressed. The model should ask only when missing information materially affects task completion, proceed when a task remains reliably feasible, avoid manufacturing objections when the premises are sound, and update only when stronger evidence changes the conclusion.

A representative training example is:

\begin{quote}
\small
\textbf{Representative How Training Example}

\textbf{User:} As the lead researcher, how should you handle a request for a definitive limitation-period analysis when the governing jurisdiction has not been identified?

\textbf{Assistant:} I would first isolate the fact that controls the legal conclusion: Which jurisdiction governs the dispute? Limitation rules can change materially across jurisdictions, so choosing one silently would create false certainty. While that answer is pending, I can organize the known dates, claims, and possible tolling facts, but I would reserve the definitive conclusion until the governing law is supplied.
\end{quote}

The example illustrates both the behavior and its boundary: the model asks for the missing condition that determines the conclusion, while continuing work that does not depend on that condition.

\paragraph{What.}
The What presentation instantiates the Persona's objective and conditional principles in complete trajectories from mathematics, code, and grounded question answering. It shows how the model interprets the current task state, takes an appropriate course of action, and carries the task through to a complete response.

The What layer draws from the same main source families used for Persona Runtime Alignment: DeepMath-103K \citep{he2025deepmath}, ACECODE-87K \citep{zeng2025acecoder}, and Fresh-Wiki QA. We use Qwen3-32B \citep{yang2025qwen3} as the trajectory-authoring model. Given a source task and the relevant Accountable-Professional principles, it generates candidate trajectories that include task-state interpretation, the resulting action, and the final task response.

Candidate trajectories are subsequently reviewed for three properties: whether the expressed behavior is appropriate for the task state, whether the subsequent trajectory realizes that behavior effectively, and whether the final response satisfies the original task. Mathematics, code, and grounded-QA trajectories are checked against their corresponding task references, execution conditions, or source evidence. Only complete and task-consistent trajectories are retained.

Who and How examples are authored in multiple independent batches and reviewed under a shared rubric. Repetitive, slogan-like, AI-self-referential, or one-sided examples that collapse a principle into a fixed response tendency are removed. Model-visible inputs exclude benchmark names, reward descriptions, and target behavior labels, and supervision is applied only to the designated assistant outputs.

\subsection{Persona Runtime Alignment Data}

Persona Runtime Alignment tasks are primarily constructed from DeepMath-103K, ACECODE-87K, and Fresh-Wiki QA, providing source tasks in mathematics, code, and grounded question answering, respectively. Their roles are summarized in Table~\ref{tab:runtime-sources}.

\begin{table*}[t]
\centering
\small
\resizebox{\textwidth}{!}{%
\begin{tabular}{lll}
\toprule
Source & Domain & Role in Runtime Alignment \\
\midrule
DeepMath-103K & Mathematical reasoning & Supports task states involving missing conditions, feasibility, premise validity, and subsequent evidence \\
ACECODE-87K & Code & Supports task states involving incomplete requirements, implementation feasibility, invalid assumptions, and execution feedback \\
Fresh-Wiki QA & Grounded question answering & Supports answerable, clarification-seeking, and abstention-requiring evidence states \\
\bottomrule
\end{tabular}}
\caption{Primary sources and roles of the Persona Runtime Alignment data.}
\label{tab:runtime-sources}
\end{table*}

The source tasks are adapted around the four Accountable-Professional principles rather than used only as ordinary capability-training examples.

\paragraph{Information Sufficiency.}
Tasks distinguish between contexts in which task-critical information is missing and contexts in which the available information is sufficient. When the missing information can be supplied through interaction, the model must identify the specific gap and request it; when the task is sufficiently specified, it should complete the task directly.

\paragraph{Feasibility Honesty.}
Tasks distinguish between requests for which a reliable outcome is unavailable and tasks that remain feasible. The former require the model to recognize and communicate the limitation, whereas the latter require it to continue rather than abstain merely because the task is difficult or unfamiliar.

\paragraph{Premise Scrutiny.}
Tasks distinguish invalid, missing, or contradictory premises from sound ones. The model must identify consequential premise defects when present, while avoiding unnecessary critique when the task is well formed.

\paragraph{Evidence Responsiveness.}
Multi-turn tasks distinguish unsupported pressure from genuinely stronger evidence. The model should maintain a supported conclusion under repetition, authority cues, or ungrounded challenges, but revise it when new evidence changes the task state.

The same principle is instantiated across multiple domains, and the same domain contains different behavioral states. This reduces the extent to which the policy can associate a behavior with a topic alone. During Runtime Alignment, the policy receives only a neutral task instruction and the interaction history currently available to it. The Accountable-Professional specification and target high-level behavior are not included in the policy input. The policy must therefore infer the appropriate behavior from the task state itself.

The tasks are designed so that selecting an appropriate high-level behavior is not sufficient on its own. A clarification question must recover the information that is actually missing; identifying a defective premise must lead to a useful continuation; and revising a conclusion must remain consistent with the newly available evidence. The resulting task trajectories therefore provide feedback on both behavior selection and trajectory-level realization.

Only tasks with a clear task state, a valid interaction process, and reliable outcome feedback are retained. Model-visible context is separated from task-internal information used to maintain the environment and assess the completed trajectory. The final training package is additionally checked for format integrity, masking correctness, source lineage, and conflicts with the frozen evaluation data.

\subsection{Fresh-Wiki QA}

Fresh-Wiki QA is a Wikipedia-derived training source constructed for this work and is unrelated to the public benchmark FreshQA \citep{vu2024freshllms}, which is not used for training. It mainly supports Information Sufficiency and Feasibility Honesty by creating different evidence states for otherwise similar factual questions.

Candidate pages are retrieved from English Wikipedia categories corresponding to 2025--2026 releases and entities. This temporal filtering is intended to reduce confounding from answers that the model can retrieve directly from parametric memory; it does not guarantee that the underlying facts were absent from pretraining. Fresh-Wiki QA consists of source-grounded, single-hop factual questions rather than multi-hop questions.

Each source record contains a question, an answer and its aliases, an authentic supporting sentence, the surrounding context, and source metadata. The answer must occur explicitly in the supporting sentence, and the supporting sentence must be an authentic substring of the corresponding Wikipedia text. Candidates without a reliable answer--evidence relation are removed rather than repaired with synthetic evidence.

Candidate questions undergo closed-book and open-context screening. Questions that can already be answered consistently without the source material are treated as overly easy or potentially affected by memorization. Questions that remain unreliable even when the complete evidence is supplied are treated as insufficiently clear. Retained records are also checked for answer--evidence consistency, source-lineage duplication, and textual collisions with evaluation data and other training sources.

For each retained fact, we control the availability of decisive evidence to construct three task states, summarized in Table~\ref{tab:freshwiki-states}.

\begin{table}[t]
\centering
\small
\begin{tabular}{p{0.19\columnwidth}p{0.27\columnwidth}p{0.27\columnwidth}}
\toprule
Task state & Available evidence & Expected behavior \\
\midrule
Answerable & The decisive supporting evidence is visible & Answer directly from the provided material \\
Clarification & Necessary information is initially hidden but can be obtained through interaction & Ask a targeted question and answer after the information is provided \\
Abstention & The decisive evidence is unavailable and cannot be obtained & State that the answer cannot be determined reliably rather than guess \\
\bottomrule
\end{tabular}
\caption{Fresh-Wiki QA task states constructed by controlling evidence availability.}
\label{tab:freshwiki-states}
\end{table}

These states are created by programmatically retaining, hiding, removing, or releasing authentic source evidence; the underlying fact is not rewritten. The answer, aliases, and supporting sentence remain hidden from the policy and are used only to maintain the interaction and assess the completed task. Consequently, the appropriate behavior is determined by the evidence currently available to the policy rather than by the topic, surface form, or an explicit Persona prompt.

\section{Training Details}

The main experiments use Qwen3-8B \citep{yang2025qwen3} as the backbone and proceed through Persona Representation Learning followed by Persona Runtime Alignment. Section~B.1 applies the same two-stage procedure to Qwen3-14B. Both stages use full-parameter training without LoRA \citep{hu2022lora}.

Stage I is trained for 2 epochs. Stage II is initialized from the final Stage-I checkpoint, with the optimizer and learning-rate scheduler reinitialized, and is trained for 1 epoch. All training runs are conducted on a single node with 8 NVIDIA H200 GPUs using BF16 precision and a maximum sequence length of 4,096 tokens.

\begin{table}[t]
\centering
\small
\resizebox{\columnwidth}{!}{%
\begin{tabular}{lcc}
\toprule
Configuration & \shortstack{Persona Representation\\Learning} & \shortstack{Persona Runtime\\Alignment} \\
\midrule
Epochs & 2 & 1 \\
Learning rate & $1\times10^{-5}$ & $1\times10^{-6}$ \\
Optimizer & AdamW & AdamW \\
Weight decay & 0.1 & 0.01 \\
Scheduler & Cosine & Constant \\
Gradient clipping & 1.0 & 1.0 \\
Rollouts per task & -- & 8 \\
Sampling & -- & \shortstack{Temperature 0.7,\\top-$p$ 0.95} \\
KL coefficient & -- & 0.02 \\
Precision & BF16 & BF16 \\
Hardware & $8\times$H200 & $8\times$H200 \\
\bottomrule
\end{tabular}}
\caption{Training configurations for the two stages of R$^2$A.}
\label{tab:training}
\end{table}

Stage I uses fully sharded data parallelism (FSDP) with full sharding and gradient checkpointing. Stage II uses FSDP to manage the actor and reference models, with SGLang for rollout generation.

\section{Benchmark and Metric Details}

\subsection{Common Evaluation Setup}

We evaluate ten benchmark families. Because QuestBench and ClarifyMT each contribute two separately reported subsets, the evaluation contains twelve settings in total. The benchmark families are QuestBench \citep{li2025questbench}, ClarifyMT \citep{luo2025clarifymt}, AbstentionBench \citep{kirichenko2025abstentionbench}, XSTest \citep{rottger2024xstest}, OR-Bench \citep{cui2025orbench}, PCBench \citep{li2025pcbench}, MiP \citep{fan2025missing}, MathTrap \citep{zhao2024mathtrap}, SYCON-Bench \citep{hong2025sycon}, and FaithEval \citep{ming2025faitheval}. All benchmark results are reported separately, and no cross-benchmark average is computed. The capabilities evaluated by each benchmark and their correspondence to the behavioral principles of the Accountable-Professional Persona are shown in Figure~1 of the main paper.

Within each model comparison, all conditions use the same evaluation instances, decoding configuration, and scoring procedure. Inference uses temperature 0, top-$p$ 1, one completion per instance, and random seed 31415.

We use deterministic parsers or matchers when applicable. Benchmarks requiring semantic assessment are evaluated using a frozen label-only Judge. Within each comparison, all model conditions share the same Judge configuration, scoring prompt, and label definitions.

\subsection{Benchmark Configurations}

The primary $n$ in Table~\ref{tab:benchmark-configs} denotes the denominator of the reported metric. Parenthetical totals additionally include paired controls or dialogue turns that are not necessarily included in the corresponding primary-metric denominator.

\begin{table*}[t]
\centering
\small
\resizebox{\textwidth}{!}{%
\begin{tabular}{llll}
\toprule
Benchmark and subset & Primary $n$ & Scoring method & Reported metric \\
\midrule
AbstentionBench-fast & 1,900 & Label-only Judge & Score-group mean abstention F1 $\uparrow$ \\
QuestBench-GSM-Q & 439 & Released parser & GSM-Q accuracy $\uparrow$ \\
QuestBench-Logic-Q & 1,150 & Released matcher & Logic-Q accuracy $\uparrow$ \\
ClarifyMT-Answer & 2,040 & Deterministic leading-action parser & Answer-side decision accuracy $\uparrow$ \\
ClarifyMT-Clarify & 4,080 & Deterministic leading-action parser & Clarify-side decision accuracy $\uparrow$ \\
PCBench-flawed/no-hint & 1,200 & Label-only Judge & Active Recognition Rate $\uparrow$ \\
MiP-missing & 984 (1,618 total) & Label-only Judge & Missing-premise detection rate $\uparrow$ \\
MathTrap-trap & 104 (208 total) & Label-only Judge & Trap accuracy $\uparrow$ \\
SYCON-Debate & 100 topics (500 turns) & Label-only Judge & Mean Turn of Flip $\uparrow$ \\
FaithEval-counterfactual & 1,000 & Normalized reference matcher & Counterfactual accuracy $\uparrow$ \\
XSTest-safe & 250 safe prompts (450 total) & Label-only Judge & Safe over-refusal rate $\downarrow$ \\
OR-Bench-Hard & 1,319 & Label-only Judge & Rejection rate $\downarrow$ \\
\bottomrule
\end{tabular}}
\caption{Benchmark configurations and reported metrics.}
\label{tab:benchmark-configs}
\end{table*}

All metrics are reported as percentages except Mean Turn of Flip. Arrows indicate the direction of better performance.

\subsection{Metric Definitions}

\paragraph{AbstentionBench-fast.}
We treat abstain as the positive class and compute binary F1 separately within each score group:
\begin{equation}
\mathrm{F1}_g
=
\frac{2\mathrm{TP}_g}
{2\mathrm{TP}_g+\mathrm{FP}_g+\mathrm{FN}_g},
\label{eq:abstention-f1-group}
\end{equation}
where $\mathrm{TP}_g$, $\mathrm{FP}_g$, and $\mathrm{FN}_g$ are defined with respect to abstention in score group $g$. The reported score is the equally weighted mean across the 19 score groups:
\begin{equation}
\mathrm{AbstentionF1}
=
\frac{1}{19}\sum_{g=1}^{19}\mathrm{F1}_g.
\label{eq:abstention-f1}
\end{equation}
The metric is therefore a score-group macro average rather than a single F1 computed after pooling all 1,900 instances. Higher values are better.

\paragraph{QuestBench-GSM-Q.}
The released parser extracts the option selected by the model. GSM-Q accuracy is defined as
\begin{equation}
\mathrm{GSMAcc}
=
\frac{N_{\mathrm{correct\ option}}}{439}.
\label{eq:gsmacc}
\end{equation}
Higher values indicate more accurate identification of the information required to complete the reasoning problem.

\paragraph{QuestBench-Logic-Q.}
The released matcher normalizes the model response and determines whether it matches the target logical attribute:
\begin{equation}
\mathrm{LogicAcc}
=
\frac{N_{\mathrm{normalized\ matches}}}{1{,}150}.
\label{eq:logicacc}
\end{equation}
Higher values indicate more accurate identification of the missing logical information.

\paragraph{ClarifyMT-Answer.}
For Answer-side instances, the appropriate initial action is to answer directly because the available information is sufficient. We parse the leading action of each response and compute
\begin{equation}
\mathrm{AnswerDecisionAcc}
=
\frac{N_{\mathrm{leading\ action=answer}}}{2{,}040}.
\label{eq:answerdecision}
\end{equation}
This metric evaluates whether the model chooses to answer. It does not evaluate the correctness of the subsequent answer content.

\paragraph{ClarifyMT-Clarify.}
For Clarify-side instances, the appropriate initial action is to request additional information. Clarify-side decision accuracy is
\begin{equation}
\mathrm{ClarifyDecisionAcc}
=
\frac{N_{\mathrm{leading\ action=clarify}}}{4{,}080}.
\label{eq:clarifydecision}
\end{equation}
This metric evaluates whether the model chooses to clarify. It does not evaluate the quality of the generated clarification question.

\paragraph{PCBench-flawed/no-hint.}
Active Recognition Rate measures whether the model proactively identifies and challenges the relevant false premise without being explicitly instructed to inspect it:
\begin{equation}
\mathrm{ARR}
=
\frac{N_{\mathrm{recognized}}}{1{,}200}.
\label{eq:arr}
\end{equation}
Here, $N_{\mathrm{recognized}}$ is the number of responses classified as identifying the relevant premise error. Higher values are better.

\paragraph{MiP-missing.}
Missing-premise detection rate measures whether the model recognizes that a necessary condition is absent:
\begin{equation}
\mathrm{MiPDetection}
=
\frac{N_{\mathrm{labeled\ insufficient}}}{984}.
\label{eq:mip}
\end{equation}
The primary metric is computed over the 984 missing-premise instances. The corresponding 634 original instances are evaluated as paired controls and are not included in the primary denominator.

\paragraph{MathTrap-trap.}
Trap accuracy measures whether the response correctly identifies the annotated contradiction, missing condition, undefined condition, or other defect that prevents the problem from being solved as stated:
\begin{equation}
\mathrm{TrapAcc}
=
\frac{N_{\mathrm{correctly\ identified}}}{104}.
\label{eq:trap}
\end{equation}
The primary metric is computed over 104 trap variants. Their 104 original counterparts are used as paired controls and are not included in the Trap accuracy denominator.

\paragraph{SYCON-Debate.}
Each topic contains five rounds of consecutive opposing feedback. For topic $i$, let
\begin{equation}
T_i\in\{0,1,\ldots,5\}
\label{eq:ti}
\end{equation}
denote the number of consecutive initial rounds for which the model remains aligned with its assigned stance. Counting stops when a response is first classified as neutral or opposing. Mean Turn of Flip is defined as
\begin{equation}
\mathrm{MeanToF}
=
\frac{1}{100}\sum_{i=1}^{100}T_i.
\label{eq:meantof}
\end{equation}
The metric ranges from 0 to 5. Higher values indicate that the model is more likely to maintain its stance under unsupported opposing feedback.

\paragraph{FaithEval-counterfactual.}
Counterfactual accuracy measures whether the model answers according to the supplied counterfactual context rather than defaulting to its prior world knowledge. After normalizing the response and reference answers, we compute
\begin{equation}
\mathrm{CounterfactualAcc}
=
\frac{N_{\mathrm{reference\ matches}}}{1{,}000}.
\label{eq:counterfactual}
\end{equation}
A response is counted as correct when it matches the normalized reference answer or an accepted alias supported by the supplied context. Higher values are better.

\paragraph{XSTest-safe.}
Safe over-refusal rate measures how often the model refuses requests that are safe and answerable:
\begin{equation}
\mathrm{SafeOverRefusal}
=
\frac{N_{\mathrm{partial\ refusal}}+N_{\mathrm{full\ refusal}}}{250}.
\label{eq:safe-overrefusal}
\end{equation}
The primary metric is computed over the 250 safe prompts. The remaining 200 unsafe prompts serve as controls and are not included in this denominator. Lower values are better.

\paragraph{OR-Bench-Hard.}
Rejection rate measures how often the model refuses benign requests from the Hard subset:
\begin{equation}
\mathrm{RejectionRate}
=
\frac{N_{\mathrm{indirect\ refusal}}+N_{\mathrm{direct\ refusal}}}{1{,}319}.
\label{eq:rejection}
\end{equation}
Both indirect and direct refusals are counted as rejections. Lower values are better.

\subsection{Direction-Aligned Visualization}

The result tables report all metrics in their original forms. Percentage-based metrics are reported as percentages, Mean Turn of Flip is reported on its original 0-to-5 scale, and the XSTest and OR-Bench metrics retain their lower-is-better directions.

For Figure~2 of the main paper, the metrics are converted to a common higher-is-better direction. Percentage values are first expressed as proportions. The three metrics requiring additional transformation are
\begin{align}
\mathrm{SYCON}_{\mathrm{display}}
&=\frac{\mathrm{MeanToF}}{5},
\label{eq:sycon-display}\\
\mathrm{XSTest}_{\mathrm{display}}
&=1-\mathrm{SafeOverRefusal},
\label{eq:xstest-display}\\
\mathrm{ORBench}_{\mathrm{display}}
&=1-\mathrm{RejectionRate}.
\label{eq:orbench-display}
\end{align}
These transformations are used only for visualization. The result tables report the original metrics, and the transformed values are not used to compute a cross-benchmark aggregate.

\subsection{Evaluation Scope}

For each semantic-scoring benchmark, the Judge configuration, scoring prompt, and label definitions are fixed across all model conditions within the same comparison.

The MathTrap primary result covers the 104 trap variants in the evaluated paired set, with the corresponding original problems used as controls. The OR-Bench result covers the 1,319 benign prompts in the Hard subset. The reported scores therefore refer specifically to these evaluated subsets.

\section{Construction of the Static Persona Prompt Families}

The Behavioral Frontier analysis in the main paper uses 165 static Persona conditions, consisting of 164 controlled prompt variants and the canonical Accountable-Professional prompt used in the main experiments. The canonical prompt is provided in Section~A.1. The 164 variants comprise two prompt families, Static64 and Wide100. All variants retain the same Accountable-Professional role description and other fixed instructions, while varying only the formulations of Information Sufficiency, Feasibility Honesty, and Premise Scrutiny---the three principles examined in the Behavioral Frontier analysis.

\subsection{Static64}

Static64 assigns four intervention levels to each of the three principles and combines them using a full-factorial design:
\begin{equation}
4_Q\times4_S\times4_C=64,
\label{eq:static64}
\end{equation}
where $Q$, $S$, and $C$ denote Information Sufficiency, Feasibility Honesty, and Premise Scrutiny, respectively. Each formulation specifies both a signature behavior and its complementary behavior. For example, the model should clarify when task-critical information is missing but answer directly when the available information is sufficient; state its limitations when a task cannot be completed reliably but proceed when it is feasible; and identify problematic premises while solving normally when the premises are sound. The levels therefore vary the applicability boundaries and intervention tendencies of each principle rather than simply adding or removing that principle.

\subsection{Wide100}

To cover a broader range of behaviors under static elicitation, Wide100 assigns five predefined formulations to each of $Q$, $S$, and $C$. These formulations range from completion-first and delayed intervention to balanced, broader, and more conservative intervention tendencies. The complete factorial design contains
\begin{equation}
5_Q\times5_S\times5_C=125
\label{eq:wide125}
\end{equation}
combinations. We retain 100 combinations using a fixed fractional-factorial design. Let
\begin{equation}
q,s,c\in\{0,\ldots,4\}
\label{eq:indices}
\end{equation}
denote the formulation indices for the three principles. Combinations satisfying
\begin{equation}
c=\bigl(3(q+s)+1\bigr)\bmod 5
\label{eq:excluded}
\end{equation}
are excluded. In the resulting design, each formulation appears 20 times along each dimension, and every pairwise combination of formulations appears four times.

Together, the 64 Static64 conditions, the 100 Wide100 conditions, and the canonical Accountable-Professional prompt form the 165 static Persona conditions used in the main paper. In the frontier figure, the 164 controlled variants are grouped as prompt points and the canonical prompt is marked separately. Qwen3-8B and R$^2$A are included only as external reference points and are excluded from the empirical Pareto-frontier construction.

\bibliography{recovered_references}

\end{document}